\newtheorem{theorem}{Theorem}
\newtheorem{corollary}[theorem]{Corollary}
\newtheorem{lemma}[theorem]{Lemma}
\newcommand{\haoran}[1]{{\color{magenta}#1}}
\definecolor{blue}{rgb}{0,0,0}
\definecolor{magenta}{rgb}{0,0,0}
\definecolor{violet}{rgb}{0,0,0}
\definecolor{brown}{rgb}{0,0,0}
\definecolor{purple}{rgb}{0,0,0}
\definecolor{teal}{rgb}{0,0,0}
\definecolor{red}{rgb}{0,0,0}
\definecolor{green}{rgb}{0,0,0}
\definecolor{orange}{rgb}{0,0,0}
\definecolor{pink}{rgb}{0,0,0}
\begin{document}

%
\title{Fair Concurrent Training of Multiple Models in Federated Learning}

\author{
\IEEEauthorblockN{Marie~Siew\IEEEauthorrefmark{1},
Haoran~Zhang\IEEEauthorrefmark{2}, Jong-Ik Park\IEEEauthorrefmark{2}, Yuezhou Liu\IEEEauthorrefmark{3}, Yichen Ruan, Lili Su\IEEEauthorrefmark{3}, Stratis Ioannidis\IEEEauthorrefmark{3},\\
Edmund Yeh\IEEEauthorrefmark{3}, and
Carlee~Joe-Wong\IEEEauthorrefmark{2}
}
\IEEEauthorblockA{\IEEEauthorrefmark{1}Information Systems Technology and Design Pillar, Singapore University of Technology and Design, 487372 Singapore}
\IEEEauthorblockA{\IEEEauthorrefmark{2}Electrical and Computer Engineering,
Carnegie Mellon University, Pittsburgh, PA 15213 USA}
\IEEEauthorblockA{\IEEEauthorrefmark{3}Electrical and Computer Engineering,
Northeastern University, Boston, MA 02115 USA}
\IEEEauthorblockA{Emails:  marie\_siew@sutd.edu.sg, \{haoranz5, jongikp, cjoewong\}@andrew.cmu.edu, \{liu.yuez, l.su\}@northeastern.edu,\\
yichenr@alumni.cmu.edu, stratis.ioannidis@gmail.com, eyeh@ece.neu.edu
}
}

\maketitle

\begin{abstract}  
    Federated learning (FL) enables collaborative learning across multiple clients. In most FL work, all clients train a single learning task. \textcolor{purple}{However, the recent proliferation of FL applications may increasingly require multiple FL tasks to \textcolor{brown}{be trained simultaneously,} sharing clients' computing resources,}
    \textcolor{brown}{which we call Multiple-Model Federated Learning (MMFL).}
    \textcolor{brown}{Current MMFL algorithms use na\"ive average-based client-task allocation schemes} that
    \textcolor{purple}{
    often lead to unfair performance when FL tasks have heterogeneous difficulty levels, \textcolor{orange}{as the more difficult tasks} 
    may need more client participation to train effectively. 
    Furthermore, in the MMFL setting, 
    we face a further challenge that some clients may prefer training specific tasks to others, and may not even be willing to train other tasks, e.g., due to high computational costs, which may exacerbate unfairness in training outcomes \textcolor{green}{across tasks.}}
    We address both challenges by firstly designing FedFairMMFL, a difficulty-aware algorithm that \textcolor{brown}{dynamically allocates} clients to tasks in each training round, based on the tasks' current performance levels.
    We provide guarantees on the resulting task fairness and FedFairMMFL's convergence rate. We then propose novel auction designs that incentivizes clients to train multiple tasks, so as to fairly distribute clients' training efforts across the tasks, and extend our convergence guarantees to this setting.
    We 
    finally evaluate our algorithm with multiple sets of learning tasks on real world datasets, showing that our algorithm improves fairness by improving the final model accuracy and convergence speed of the worst performing tasks, while maintaining the average accuracy across tasks.
\end{abstract}
\begin{IEEEkeywords}
Federated Learning, Fair Resource Allocation, Incentive Mechanism
\end{IEEEkeywords}

%
\IEEEpeerreviewmaketitle

\section{Introduction}
\label{section:Introduction}
Federated learning (FL) enables multiple devices or users (called clients) to collectively train a machine learning model ~\cite{mcmahan2017communication, yao2023fedrule,aivodji2019iotfla, liu2023cache,liao2023accelerating}, \textcolor{violet}{with data staying at the user side.} 
Several recent works have demonstrated FL's potential in many applications, such as smart homes~\cite{yao2023fedrule,aivodji2019iotfla}, analyzing healthcare data~\cite{nguyen2022federated}, and 
next word prediction~\cite{imteaj2019distributed}. 
Despite the expanding range of FL applications, however, the vast majority of FL research assumes that clients jointly train just \textit{one} model during FL \cite{liao2023adaptive,liao2024mergesfl,liao2024parallelsfl}. 
In practice, 
clients may increasingly need to train multiple tasks (models) at the same time, 
e.g., smartphones contributing to training both next-word prediction and advertisement recommendation models; 
\textcolor{orange}{data from wearables contributing to the FL training of various human physical activity recognition and health monitoring (e.g. heart rate prediction or early disease detection) models concurrently;}
hospitals contributing to training separate prediction models for heart disease and breast cancer;
\textcolor{magenta}{or intelligent vehicles generating data for driving-related tasks (e.g. control and localization), and
safety-related tasks (e.g., obstacle detection and driver monitoring).}
\textcolor{purple}{Clients could also train multiple versions of a model, in order to select the best-performing one for deployment, e.g., training many different neural network architectures on the same data.}
While a na\"ive solution would simply train all models sequentially, doing so is inefficient, as the required training time would scale linearly in the number of models \cite{askin2024fedast}.
\textcolor{magenta}{This approach also makes the later models wait for the prior models to finish training, which may be unfair.}
Moreover, training multiple tasks concurrently in one FL setting amortizes the real-world costs of recruiting clients and coordinating the training time across clients, improving resource efficiency.
We call this setting \textcolor{magenta}{in which FL clients concurrently train multiple tasks, \textit{Multiple-Model Federated Learning} (MMFL) \cite{bhuyan2022multiProofs,bhuyan2022multi, li2021efficient,liu2022multi,chang2023asynchronous,siew2023fair,askin2024fedast}, and some prior research has demonstrated that this concurrent training approach can accelerate the overall training process}
\textcolor{violet}{\cite{bhuyan2022multiProofs,bhuyan2022multi}.} 

\textcolor{magenta}{Prior works on MMFL \cite{bhuyan2022multi, bhuyan2022multiProofs, liu2022multi, li2021efficient, chang2023asynchronous} seek to optimize the client-task allocation each training round, given the resource constraints of devices. 
These works allocate clients to tasks so as to 
optimize the \textit{average} task accuracy or training time,} and often assume that training tasks have \textit{similar difficulties}, e.g., training multiple copies of the same model~\cite{bhuyan2022multi,bhuyan2022multiProofs}.

\textcolor{violet}{In practice, however, FL tasks \textcolor{magenta}{may have varying difficulty levels, and} may be requested by different entities and users. Therefore, it is important to ensure comparable convergence time and converged accuracies across tasks of \textit{varied difficulty levels}.} 
\textcolor{magenta}{For instance, wearable devices frequently train lightweight models for monitoring simpler metrics like step counting or heart rate tracking, concurrently with more complex tasks like early disease detection (e.g., arrhythmia detection). 
We wish to concurrently train these different models, without favouring one model over the other.
}
\textcolor{violet}{Likewise, in applications like autonomous driving, improving 
the worst-performing FL task helps to improve the overall quality of experience for users.}
\textcolor{purple}{Indeed, even if the multiple tasks are different models trained by the same FL operator, the operator will not know which model performs best until the training of \textit{all} models is complete: we therefore would want to \textcolor{brown}{dynamically} allocate clients to tasks so as to optimize the performance of the worst-performing task, which is accomplished by optimizing for fair task accuracies, as well as high average accuracy.} 
Existing MMFL works that focus only on average task accuracy cannot also ensure fair performance across tasks.
This goal of \textit{fairness} is widely accepted as an important performance criterion for generic computing jobs~\cite{joe2013multiresource,altman2008generalized,bampis2018fair}, but is less well-studied in MMFL.
To the best of our knowledge, we are the first to extend the MMFL setting to consider \textbf{fair training} of concurrent FL models. 


\subsection{Research Challenges in Fair MMFL}

In considering fairness across multiple different models, we seek to ensure that each model achieves \textbf{comparable training performance}, \textcolor{blue}{in terms of their converged accuracies, or the time taken to reach threshold accuracy levels.} The first challenge to doing so is that \textbf{the multiple models can have varying difficulties} 
and thus require different amounts of data and computing resources to achieve the same accuracy. Existing MMFL training algorithms, e.g., random allocation, or the round-robin algorithm proposed in~\cite{bhuyan2022multiProofs}, allocate clients to tasks uniformly on expectation, \textcolor{violet}{or assume that tasks have similar or the same difficulty, 
e.g., training multiple copies of the same model \cite{bhuyan2022multi}}. 
However, while such strategies arguably achieve fairness, or at least equality, with respect to resource (i.e., client) allocation to different models, they may lead to ``unfair'' outcomes. \textcolor{purple}{For example, smart home devices may train simpler temperature prediction models that take low-dimensional timeseries as input data, as well as complex person identification ones that use high-dimensional image input data~\cite{aivodji2019iotfla}.} 
\textcolor{teal}{
Allocating equal resources to each model may then yield poor performance for the more difficult ones.} 
\textcolor{pink}{
As the easier tasks may need less resources in the form of clients compared to the more difficult tasks, a further optimization of client-task allocation could lead to faster convergence and higher accuracies for the more difficult tasks, while maintaining the performance of the easier tasks.}

Much work in computer systems and networks has focused on 
fairness in allocating resources to different computing jobs or network flows, given their heterogeneous resource requirements \cite{joe2013multiresource,altman2008generalized,bampis2018fair}. 
Directly applying these allocation algorithms to MMFL, however, is difficult: here, each ``resource'' is a client, and the performance (e.g. accuracy) improvement realized by allocating this client to a given FL task is unknown in advance and will change over time. 
This is because it depends on the current model state and the data distribution across clients, both of which are highly stochastic. We must therefore handle \textbf{uncertainty and dynamics} \textcolor{purple}{in how much each client contributes to each task's training performance}.

Finally, purely viewing MMFL fairness as a matter of assigning clients to tasks ignores the fact that \textbf{clients may not be equally willing to contribute resources to each model}. For example, smartphone users may benefit from better next-word prediction models if they text frequently, but may derive limited benefit from a better advertisement recommendation model. Also, users may refuse to compute updates 
if they judge that associated computing costs outweigh personal benefits. If multiple users prefer one model over another, then the disfavored model will overfit to users willing to train it, again leading to an unfair outcome. In an extreme case, no users may be assigned to train this task at all. Prior work\textcolor{blue}{\cite{deng2021fair}} has proposed mechanisms to incentivize clients to help train multiple non-simultaneously trained FL models, 
but \textcolor{blue}{their focus was on optimizing} the task take-up rate, 
instead of ensuring that clients are \textit{fairly distributed} across models.

\subsection{Our Contributions}
Our goal is to design methods to ensure fair concurrent training of MMFL, addressing the challenges above. To do so, we utilize the popular metric of 
$\alpha$-fairness, commonly used for fair resource allocation in networking \cite{lan2010axiomatic} and previously proposed to measure fairness across clients in training a single FL model \cite{li2019fair}.
We use $\alpha$-fairness and Max-Min fairness \textcolor{red}{(a special case of $\alpha$-fairness when $\alpha \rightarrow \infty$ \cite{lan2010axiomatic})} to design novel algorithms that both \textit{incentivize} clients to contribute to FL tasks and \textit{adaptively allocate} these incentivized clients to multiple tasks during training. Intuitively, these algorithms ensure that client effort is spread across the MMFL tasks in accordance with their difficulty, helping to achieve fairness with respect to task performance without compromising average task performance.
After discussing related works in Section~\ref{sec:related}, we make the following contributions:
\begin{itemize}
\item We first suppose that all clients are willing to train all models, and propose \textbf{FedFairMMFL}, an \textbf{algorithm to fairly allocate clients to tasks} (i.e., models) in Section~\ref{sec:formulation}. Our algorithm \emph{dynamically and adaptively} allocates clients to tasks according to the current performance levels of all tasks.
We design \textbf{FedFairMMFL} to approximately optimize the $\alpha$-fairness of all tasks' training accuracies. 
    \item From the perspective of a single model, our client-task allocation algorithm in MMFL can be viewed as a form of \textit{client selection}. 
    With this perspective, we \textbf{analyze the convergence of MMFL tasks} under \textbf{FedFairMMFL} in Section~\ref{sec:fairConvgProofs}, showing that \textbf{FedFairMMFL} preferentially accelerates convergence of more difficult tasks relative to an equal allocation of clients to tasks (Theorem~\ref{theorem:convg_boundofOneItr} and Corollary \textcolor{blue}{\ref{corollary:onAlphaAndBSelTerm}}) and allows each model's training to converge (Corollary \textcolor{blue}{\ref{corollary:firstConvgBound}}).
    \item We next suppose that clients may have unequal interest in training different tasks and may need to be incentivized. 
    We discuss \textit{Budget-Fairness} and \textit{Max-min Fairness}, and we propose a \textbf{max-min fair auction mechanism} 
    to ensure that a fair number of clients are incentivized across tasks. 
    We show that it incentivizes truthful client bids with high probability (Theorem \ref{theorem:casesUntruthful} and Corollary  \ref{corollary:UntrImpactMaxmin}). We also show that it leads to more fair outcomes: it allocates more clients to each model, compared to other auction mechanisms that either ignore fairness or allocate the same budget to each task (Corollary \ref{cor:auctions_compare}).
    Following which, 
    we provide a convergence bound showing how our proposed incentive mechanisms coupled with \textbf{FedFairMMFL} impact the convergence rate (Theorem \ref{theorem:AuctionConvg}).
    \item We finally \textbf{validate} our client incentivization and allocation algorithms on multiple FL datasets. Our results show that our algorithm achieves a higher minimum accuracy across tasks than random allocation, round robin \textcolor{orange}{and client-fairness} baselines, \textcolor{black}{and a lower variance across task accuracy levels,} 
    \textcolor{orange}{while maintaining the average task accuracy}.
    \textcolor{blue}{Our max-min client incentivization further improves the minimum accuracy in the budget-constrained regime.} 
\end{itemize}

We discuss future directions and conclude in Section~\ref{sec:conclusion}.

\section{Related Work}\label{sec:related}

We distinguish our MMFL setting from that of \textbf{multi-task learning} (MTL), in which multiple training tasks have the same or a similar  structure~\cite{multiTaskSurvey}, e.g., a similar underlying lower rank structure, \textcolor{orange}{and are jointly learnt.} 
Unlike clustered FL~\cite{ruan2022fedsoft,hu2022fedcross}, 
our work is different from combining MTL with FL. Instead we consider multiple \textit{distinct} learning tasks, \textcolor{orange}{and their competition for limited client training resources.} 
Our analysis of MMFL performance is closer to prior convergence analysis of \textbf{client selection in FL} \cite{xia2020multi,cho2022towards,nishio2019client}. 
We adapt and modify prior work on FL convergence to reveal the relationships between model convergence and our definition of fairness across tasks. 

Past work on \textbf{multi-model federated learning} also aims to optimize the allocation of clients to model training tasks. \textit{To the best of our knowledge, we are the first to explicitly consider fairness across the different tasks}. 
Some algorithms focus on models with similar difficulties, 
for which fairness is easier to ensure~\cite{bhuyan2022multiProofs}, while other works propose 
reinforcement learning~\cite{liu2022multi}, heuristic~\cite{li2021efficient}, or asynchronous client selection algorithms~\cite{chang2023asynchronous,askin2024fedast} to optimize average model performance. Other work optimizes the training hyperparameters~\cite{nguyen2021toward} and allocation of communication channels to tasks~\cite{xia2020multi}. 

\textbf{Fair allocation of multiple resources} 
has been considered extensively in computer systems and networks ~\cite{poullie2017survey, joe2013multiresource, altman2008generalized}. 
In crowdsourcing frameworks, the fair matching of clients to tasks 
has been studied in 
energy sensing~\cite{lakhdari2021fairness}, delivery~\cite{basik2018fair}, and social networks~\cite{gan2017social}. 
However, unlike MMFL tasks, individual crowdsourcing tasks generally do not need multiple clients. 
Fair resource allocation across tasks in FL differs from all the above applications as the amount of performance improvement realized by allocating one (or more) clients to a given model's training task is unknown and stochastic, dependent on the model state and clients' data distributions.
In FL, fairness has generally been considered from the \textit{client}
perspective, e.g., accuracies or contribution 
when training a single model~\cite{li2019fair, donahue2021models}, \textcolor{blue}{and not across distinct training tasks.}

Many \textbf{incentive mechanisms} have been proposed for FL and other crowdsourcing systems~\cite{zhan2021survey, zhan2020learning, weng2021fedserving, wang2022dynamic, tang2021incentive}, including auctions for clients to participate in many FL tasks~\cite{deng2021fair}.
To the best of our knowledge, this is the first time client incentives 
have aimed to ensure fair training of multiple tasks in MMFL. 


\section{Fair Client Task Allocation for Multiple-Model FL}\label{sec:formulation}
\textcolor{blue}{We aim to train multiple models \textcolor{brown}{(i.e., tasks)}\footnote{\textcolor{brown}{We use model and task interchangeably in this paper.}} concurrently,
such that there is \textit{fair} training performance across models, e.g., in terms of the time taken, or converged accuracies. 
Models may be trained on distinct or the same datasets.
This performance is a function of which clients train which model at which training round.
In this section, we assume that clients \textcolor{orange}{have already committed to training all models,} 
and propose a client-model (i.e., client-task) allocation scheme that enables fair training of the multiple tasks.  
In Sec.~\ref{sec:incentiveAware}, we introduce auction mechanisms, \textcolor{violet}{which incentivize clients to commit to tasks at a fair rate across tasks,} 
before training begins (Fig.~\ref{fig:MMFLdiag}).
}%
\begin{figure}[t]
\centering
\includegraphics[scale=0.255]{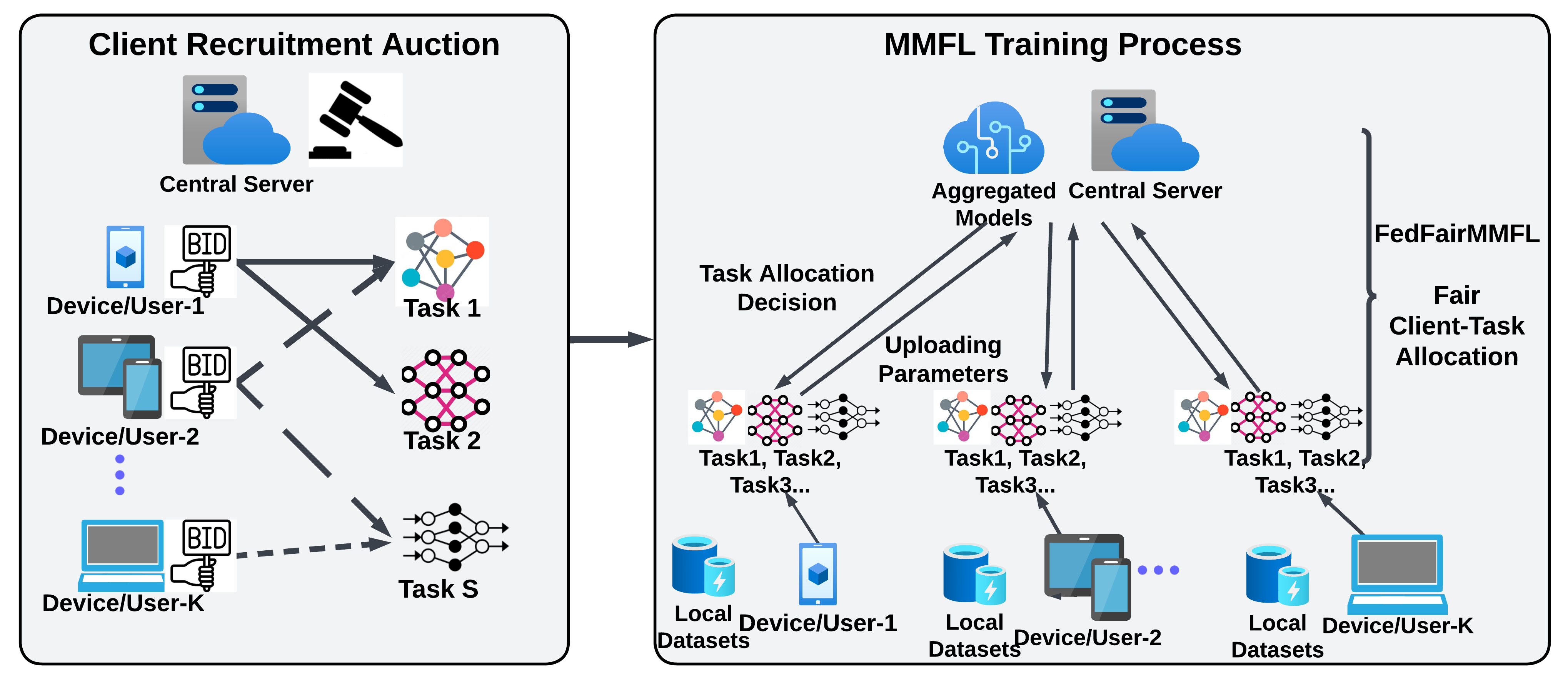}
  \caption{Fair Multiple Model Federated Learning to ensure that each model achieves comparable training performance: The pipeline's first part involves \textit{incentive-aware} client recruitment auctions to ensure a fair take-up rate over tasks. This is followed by a \textit{difficulty-aware} client-task allocation scheme that allocates tasks to clients based on the current loss levels of all tasks.
  } 
  \label{fig:MMFLdiag}
  \addtolength{\textfloatsep}{-0.2in}
  \vspace{-5mm}
\end{figure}

\noindent\textbf{MMFL framework:} Consider $K$ clients collaboratively training $S$ distinct machine learning tasks in MMFL (Fig. \ref{fig:MMFLdiag}). \textcolor{pink}{We let the set of tasks be $\mathbb{S}$. }
If $S = 1$, our setting reduces to traditional FL. Each client is equipped with local data for training each task. 
All clients can communicate with a central server that maintains a global model $\textbf{w}_s$ for each task \textcolor{red}{$s \in \mathbb{S}$.} 
Let each client $k$'s local loss over its local dataset $\mathcal{D}_{k,s}$ for task $s$ be $F_{k,s}(\textbf{w}_s) = \sum_{d\in\mathcal{D}_{k,s}} \ell\left(d, \textbf{w}_s\right)/\left|\mathcal{D}_{k,s}\right|$, where $\ell$ is a loss function that measures the performance of the model $\textbf{w}_s$ on data point $d$, and $\left|\mathcal{D}_{k,s}\right|$ is the size of client $k$'s dataset for task $s$. We then define the global loss for task $s$ \textcolor{violet}{as a weighted sum over the local losses:} 
\begin{equation}
f_s(\textbf{w}_s)=\sum_{k=1}^K p_{k,s} F_{k,s}(\textbf{w}_s),
\end{equation}
where $p_{k,s} = \frac{\left|\mathcal{D}_{k,s}\right|}{\sum_k \left|\mathcal{D}_{k,s}\right|}$ is the fraction of training data for task $s$ that resides at client $k$. Similar to traditional FL, in MMFL, the usual aim is to minimize the sum of losses~\cite{chang2023asynchronous}:
\begin{equation}
    \min_{\textbf{w}_1,\textbf{w}_2, ..., \textbf{w}_S} \sum_{s=1}^S f_{s}(\textbf{w}_s),
\end{equation}
%
\textbf{Fairness over tasks:} We introduce \textit{fairness} into this objective by utilizing the celebrated $\alpha$-fair utility function often employed to ensure fair network resource allocation~\cite{lan2010axiomatic,altman2008generalized}. Viewing the loss of each model as an inverse ``utility'', we then define our $\alpha$-fair MMFL objective as
\begin{equation}\label{eq:mmfl_obj}
    \min_{\textbf{w}_1,\textbf{w}_2, ... \textbf{w}_S} g\left(\textbf{w}_1,\textbf{w}_2, ... \textbf{w}_S\right) \equiv \sum_{s=1}^S f_{s}^{\alpha}(\textbf{w}_s),
\end{equation}
where $f_{s}^{\alpha}$ is task $s$'s $\alpha$-fair global loss function. This objective function helps to improve the uniformity of the converged losses, as we will analytically show in Section \ref{sec:fairConvgProofs}. 
Here the parameter $\alpha\in [1,\infty)$ controls the emphasis on fairness; for $\alpha = 1$, the objective is to minimize the average global loss across tasks. As $\alpha$ grows, we put more emphasis on fairness,\footnote{Usually, $\alpha$-fair functions aim to \textit{maximize} a sum of utility functions $(1 - \alpha)^{-1}\sum_s U_s^{1 - \alpha}$, where $U_s$ is the utility of task $s$. We instead aim to \textit{minimize} the sum of $\alpha$-fair loss functions, viewing the loss as an \textit{inverse} utility.}
\textcolor{violet}{because analytically, a higher $\alpha$ will give higher priority to the terms in the objective with larger $f_s$, during minimization.}

To solve (\ref{eq:mmfl_obj}) within the FL framework, we will 
design a training algorithm given the gradient 
\begin{equation} 
\label{eq:grad_g}
 \nabla_{\textbf{w}_s} g = \alpha f_{s}^{\alpha-1}(\textbf{w}_s) \sum_{k=1}^K p_{k,s} \nabla F_{k,s}(\textbf{w}_s).
\end{equation}

\noindent\textbf{FedFairMMFL training (Algorithm~\ref{algo:FedFairMMFL}):} Our MMFL training begins with the initialization of the global model weights \(\textbf{w}_s\) for all tasks, along with an initial random client-task allocation (line 1). The algorithm then enters a loop over global rounds (lines 2-16). During each global round $t$, 
\(m = C \times K\) \textcolor{brown}{clients are active} (line 3). Here $C$ is the \textcolor{brown}{rate at which clients are active.} 
\textcolor{magenta}{Clients can be inactive due to reasons such as low or unreliable connectivity, devices sleeping to conserve battery, or the lack of incentives on the part of clients.}
For all active clients, \textcolor{brown}{task allocation happens.}
\textcolor{magenta}{We let $Sel_s(t)$ denote the set of clients allocated to task $s$ in round $t$.} 
Following prior work on MMFL \cite{bhuyan2022multiProofs, bhuyan2022multi,liu2022multi}, we \textit{allocate each client to at most one task during each training round}, i.e., the $Sel_s(t)$ are disjoint across tasks $s$. For example, mobile clients \textcolor{violet}{or other resource constrained devices} may be performing other background tasks, e.g., running multiple applications, and thus may not have the computing power or battery to complete training rounds of multiple tasks. 

Intuitively, uniform client-task allocation methods, like random or round robin allocation~\cite{bhuyan2022multi}, will train simpler tasks faster, since they allocate an equal number of clients to each task, on expectation.
We would instead like to ensure tasks are trained at more similar rates (i.e., achieve more fair performance) with a \textbf{difficulty-aware} client-task allocation algorithm, in which more users are allocated to tasks that currently have a higher loss compared to other tasks.
We can do so by inspecting the gradient of our MMFL objective in (\ref{eq:grad_g}). Since \textcolor{violet}{resource-constrained} clients can only compute gradients for one task per training round, we cannot exactly compute these gradients in (\ref{eq:grad_g}). Instead, we will assign each client to contribute to the gradient $\nabla F_{k,s}(\textbf{w}_s)$ for each task $s$ 
\textcolor{black}{at a probability in proportion to its weight for that task}, i.e., proportional to $\alpha f_s(\textbf{w}_s)^{\alpha - 1}$. 
We therefore allocate a task \textcolor{magenta}{to each active client} according to the probability vector \textbf{p}
\textcolor{brown}{(lines 4-5)}:
\begin{equation}
    \textbf{p}=\left[\frac{f^{\alpha-1}_1}{\sum_{s\in \mathbb{S}} f^{\alpha-1}_s}, \frac{f^{\alpha-1}_2}{\sum_{s\in \mathbb{S}} f^{\alpha-1}_s}, ..., \frac{f^{\alpha-1}_S}{\sum_{s\in \mathbb{S}} f^{\alpha-1}_s} \right],
    \label{eq:TaskAllocProbVec}
\end{equation}

This allocation fulfills our intuition: \textbf{tasks currently with higher losses $f_s$ have a higher probability of being allocated clients.} 
\textcolor{blue}{For example, when $\alpha \rightarrow \infty$, \textit{all clients} will be allocated to the worst performing task, and when $\alpha=1$, allocation will be uniform.}
\textcolor{orange}{Our algorithm does not assume that the task with larger model is the more difficult task, rather it takes the current performance (loss) level $f_s(\textbf{w}_s)$ as a signal on which task needs more clients currently.}
In Section~\ref{sec:fairConvgProofs}, we formalize this intuition by deriving a convergence bound for each task using \textbf{FedFairMMFL}. 
This selection scheme is \textit{unbiased} across clients: \textcolor{pink}{
for each specific task, every client has the same probability of being selected. }

\textcolor{orange}{\textbf{Normalization:} Different models' loss values may belong to different ranges, 
making them difficult to compare. 
Therefore, in practice, we ensure comparable values by instead using 
the accuracy percentages, for all tasks, as an estimate of $f_s(\textbf{w}_s(t))$, to obtain Eq. (\ref{eq:TaskAllocProbVec}).}

Given the clients allocated to each task $s$ at time $t$, $Sel_s(t)$, the server then sends the latest global model weights  $\textbf{w}_s(t)$ to task $s$'s allocated clients (lines 7). Each client then trains its task on its corresponding local dataset by running $\tau$ SGD steps on the local loss $F_{k,s}$ of its assigned task $s$, as in traditional FL~\cite{mcmahan2017communication} (line 8-9). Let $\textbf{w}_{k,s}(t)$ denote the updated local model for active client $k$ on its allocated task $s$.
Each active client then uploads $\textbf{w}_{k,s}(t)$ to the server (line 10), which aggregates the model parameters to synchronize the global model. This aggregation uses weights 
\textcolor{orange}{
\begin{equation}
p_{k,\textcolor{magenta}{Sel_s}} = p_{k,s}/\sum_{k'\in \textcolor{magenta}{Sel_s}} p_{k',s},
\end{equation}}
\textcolor{orange}{which indicate the fraction of the training data across selected clients that resides at client $k$}, for each active client $k$ (line 13). \textcolor{magenta}{$Sel_s$} is the set of clients training task $s$ in this training round.
Hence, $p_{k,\textcolor{magenta}{Sel_s}}$ is a function of $p_{k,s}$.
\textcolor{brown}{The server then obtains the global loss for each task $f_s$, to guide allocation in the next training round (line 14).}
The next training round then begins.

\textcolor{orange}{\textbf{Reducing overheads:} To reduce communication overhead, 
in \textbf{FedFairMMFL}, we can instead let the central server 
evaluate the global loss $f_s(\textbf{w}_s)$ on its test set, for client task allocation. 
This reduces communication and computation (inference) overhead each global round.}

\textcolor{magenta}{\textbf{Distribution of active clients:} 
For simplicity, we assume that clients have the same active rate.
Our algorithm can also be applied for the setting where clients have different active rate from each other and over time:
given the active clients each global round, our algorithm will allocate each active client to a task, according to the probability vector \textbf{p} (Eq. (\ref{eq:TaskAllocProbVec})).}

\textcolor{orange}{In Table \ref{tab_parameters}, we present the various parameters of our system model.}

\begin{table}[!htbp]
	\caption{Definition of Notations}
	\centering
		\setlength{\tabcolsep}{0.3mm}{
			\begin{tabular}{|c|l|}
				\hline
				\hline
				\textcolor{orange}{Notations} & \textcolor{orange}{Definitions}  \\
				\hline
    $\mathbb{S}= \{1,2,... ,s, ..., \} $ & The set of all FL tasks.\\
    \hline
    \textcolor{magenta}{$K$} & \textcolor{magenta}{The total number of clients.} \\
    \hline
$ \mathbf{w}_s $ 
&  Global model for task $s$.\\
    \hline
				$ \mathcal{D}_{k,s}$ & Client $k$'s local dataset for task $s$. \\
    \hline
				$F_{k,s}(\textbf{w})$ & 
    Client $k$'s local loss for task $s$, given model $\textbf{w}$. \\ 
    \hline
				$ \ell \left(d, \textbf{w}_s\right)$ & The loss of model $\textbf{w}_s$ on datapoint $d$.\\
    \hline
				$f_s(\textbf{w}_s)$ & Global loss for task $s$.\\
    \hline
				$ p_{k,s}$ & 
    Fraction of task $s$'s data residing at client $k$.\\ 
    \hline
    $f_s^{\alpha}(\textbf{w}_s)$ & $\alpha$-fair global loss function for task $s$.\\
    \hline
				$g(\textbf{w}_1 , \textbf{w}_2, ...)$ & Sum of the $\alpha$-fair global loss function, across tasks.\\
                \hline
                \textcolor{magenta}{$\mathbf{p}$} & \textcolor{magenta}{Client selection probability vector.} \\
    \hline
				\textcolor{magenta}{$ Sel_s(t)$} 
                & Subset of clients allocated to task $s$ at round $t$.\\
    \hline
    $\textbf{w}_{k,s}(t)$ & Local model, client $k$, task $s$, at time $t$. \\
    \hline
    $p_{k,Sel_s}$ & \begin{tabular}[c]{@{}l@{}}  Proportion of the data which client $k$ has, amongst \\ the selected clients in \textcolor{magenta}{$Sel_s$}. \end{tabular} \\
    \hline
    $a_{k,s}(t)$ & Variable indicating if client $k$ trains task $s$.\\
    \hline
    $B^s_{Sel_s}(\alpha)$ & \begin{tabular}[c]{@{}l@{}} 
 Probability that a specific set of clients $Sel_s$ is \\  selected for task $s$ by our algorithm. See Eq. (\ref{eq:B_s_Sel}). \end{tabular} \\
 \hline
 $\textbf{w}_s^*$& Global optimal. \\
 \hline
 $\textbf{w}_{k,s}^* $ & Client $k$'s optimal.\\
 \hline
 $\Gamma_s$ &  \begin{tabular}[c]{@{}l@{}} The gap between the loss evaluated at the global\\ optimum $\mathbf{w}_s^*$, and the loss evaluated at  $\mathbf{w}^*_{k,s}$, \\ for task $s$. See Eq. (\ref{eq:gamma_s}). \end{tabular}\\
				\hline
   $\rho^{\alpha}(B^s_{Sel_s}, \textbf{w}' )$ & \begin{tabular}[c]{@{}l@{}} Selection skew, the ratio of the expected 'local-\\global objective gap' of our algorithm's selected\\ clients, over the 'local-global objective gap' of all\\ $K$ clients. See Eq. (\ref{eq:selectionSkew}). \end{tabular} \\
   \hline
   $\overline{\rho}^{\alpha}$, $\underline{\rho}^{\alpha}$ & Upper and lower bounds on the selection skew.\\
   \hline 
   $\sigma^2$ & Bound on the variance of stochastic gradient.  \\
   \hline 
   $(\mathbf{w}^*_{1_{\alpha}}, ... \mathbf{w}^*_{S_{\alpha}})$
   & \begin{tabular}[c]{@{}l@{}} The global optimal solution when parameter $\alpha$ is \\ used. \end{tabular} \\
   \hline
		\end{tabular}}
	\label{tab_parameters}
\end{table}

\begin{algorithm}[t]
	\caption{\textcolor{red}{FedFairMMFL: Fair Training of Multiple Models in Federated Learning}}
 \label{algo:FedFairMMFL}
 \small
	\begin{algorithmic}[1]
        \State \textbf{Initialization:} The global model weights $\textbf{w}_s$, for all tasks, \textcolor{violet}{initial random client-task allocation.}
		\For {$Global~ epoch ~t=1,2,\ldots T$}
                \State 
                $m = C \times K$ clients \textcolor{brown}{are active.}
                \State \textcolor{brown}{Server determines task allocation for active clients,} 
            \State \textcolor{brown}{based on $\mathbf{p}=\left[{f^{\alpha-1}_1}/{\sum_{s\in \mathbb{S}} f^{\alpha-1}_s}, ... \right]$ (Eq. \ref{eq:TaskAllocProbVec}).}
                \For{all \textcolor{brown}{active} clients $k$} 
                    \State Server sends the latest model weights of task $s$
                    ($\textbf{w}_s$).
                    \State Client performs local training for $\tau$ local rounds for
                    \State allocated task, optimizing $\textbf{w}_{k,s}$.
                    \State Client uploads updated $\textbf{w}_{k,s}$ to the server.
			\EndFor
			\For{all tasks $s \in \mathbb{S}$} 
                    \State Server aggregates 
                   $\textbf{w}_s \gets \sum_{k\in Sel_s}  p_{k,Sel_s}  \textbf{w}_{k,s} $. 
                    \State \textcolor{brown}{Obtain global accuracy $f_s$ for task $s$. }
                \EndFor
           \EndFor
	\end{algorithmic} 
\end{algorithm}


\section{Fairness and Convergence Guarantees}
\label{sec:fairConvgProofs}

We now analyze \textbf{FedFairMMFL}'s performance. \textcolor{red}{The full versions of all our proofs are in our Technical Report \cite{MMFLappendix}.}

\subsection{Fairness Guarantees}
We first analyze the optimal solution to our fairness objective (\ref{eq:mmfl_obj}) to show that it reduces the variation in training accuracy, which intuitively improves fairness across tasks. 
Firstly, we let \textcolor{orange}{ 
$G_\alpha$ represent the optimal value of our MMFL fairness objective (\ref{eq:mmfl_obj}), given that each client only trains one task each global epoch}. 
\begin{equation}
    G_\alpha =\min_{\textcolor{orange}{\mathbf{w}_1, ... \mathbf{w}_S}}   \sum^S_{s=1} f^{\alpha}_s (\textbf{w}_{s}) \quad
    \text{s.t.} \sum_s a_{k,s}(t)\leq 1\ \forall k\ \forall t ,
    \label{eq:def_w_s_alpha}
\end{equation}
\textcolor{blue}{where \textcolor{brown}{$a_{k,s}(t)=1$ indicates that client $k$ trains task $s$ during epoch $t$, and} $\sum_s a_{k,s}(t)\leq 1\ \forall k\ \forall t$ expresses the constraint that each client trains 1 task each epoch \textcolor{pink}{at most}. 
\textcolor{orange}{We let 
\((\mathbf{w}^*_{1_{\alpha}}, ... \mathbf{w}^*_{S_{\alpha}})\)
} 
denote the global optimal solution \textcolor{pink}{for \eqref{eq:def_w_s_alpha}} when parameter $\alpha$ is used.}  

To motivate the choice of $\alpha$, we show that for $\alpha = 1$ and $2$, a higher $\alpha$ puts more emphasis on fairness: we relate this $\alpha$-fairness metric to the well-known \textit{variance} and \textit{cosine similarity} metrics for similarity across task accuracies. 
\textcolor{magenta}{Note that the specific case of $\alpha=2$ is equivalent to the square operation on $f_s$.}

\begin{lemma}[Fairness and Variance]\label{lemma:variance} $\alpha=2$ yields a smaller variance in the task \textcolor{orange}{losses} 
than $\alpha=1$ does:
\begin{equation*}
\textbf{Var}(f_1(\textbf{w}_{1_2}^*), ..., f_S(\textbf{w}_{S_2}^*)) \leq \textbf{Var}(f_1(\textbf{w}_{1_1}^*), ..., f_S(\textbf{w}_{S_1}^*)). 
\label{eq:VarTheoremEqn}
\end{equation*}
\textcolor{magenta}{The left hand side of this equation refers to the variance in the task losses, at the optimal solution of $G_{\alpha}$, when $\alpha=2$. The right hand side of this equation refers to the variance in the task losses, at the optimal solution of $G_{\alpha}$, when $\alpha=1$.}
\end{lemma}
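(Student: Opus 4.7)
The plan is to exploit the optimality of each $\alpha$-fair solution within the other's objective, together with the identity $\mathrm{Var}(x_1,\dots,x_S) = \frac{1}{S}\sum_s x_s^2 - \big(\frac{1}{S}\sum_s x_s\big)^2$. To lighten notation, I will write $L_s := f_s(\mathbf{w}^*_{s_1})$ and $L'_s := f_s(\mathbf{w}^*_{s_2})$ for the task losses evaluated at the $\alpha=1$ and $\alpha=2$ optima, respectively. The crucial observation is that both $(\mathbf{w}^*_{s_1})_s$ and $(\mathbf{w}^*_{s_2})_s$ are feasible for the shared constraint set in \eqref{eq:def_w_s_alpha} (the constraint depends only on the client--task allocation variables $a_{k,s}(t)$, not on $\alpha$), so each is an admissible candidate in the other's minimization problem.

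First, I would apply the optimality of the $\alpha=2$ solution: since $(\mathbf{w}^*_{s_2})_s$ minimizes $\sum_s f_s^2(\mathbf{w}_s)$ and $(\mathbf{w}^*_{s_1})_s$ is a feasible candidate, we get $\sum_s (L'_s)^2 \le \sum_s L_s^2$. Symmetrically, optimality of the $\alpha=1$ solution gives $\sum_s L_s \le \sum_s L'_s$. Because losses $f_s$ are nonnegative, both sums are nonnegative, and squaring preserves the inequality, yielding $\big(\sum_s L_s\big)^2 \le \big(\sum_s L'_s\big)^2$.

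Combining the two inequalities through the variance identity then closes the argument:
\begin{equation*}
\mathrm{Var}(L'_1,\dots,L'_S) = \tfrac{1}{S}\sum_s (L'_s)^2 - \tfrac{1}{S^2}\big(\textstyle\sum_s L'_s\big)^2 \le \tfrac{1}{S}\sum_s L_s^2 - \tfrac{1}{S^2}\big(\textstyle\sum_s L_s\big)^2 = \mathrm{Var}(L_1,\dots,L_S),
\end{equation*}
which is exactly the claim.

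The main obstacle is conceptually small but worth stating carefully: the argument depends on $f_s \ge 0$ so that summing nonneg quantities and squaring preserves the inequality $\sum_s L_s \le \sum_s L'_s$. Since $f_s$ is a (normalized) loss, this is immediate, but I would note it explicitly so the step from $\sum_s L_s \le \sum_s L'_s$ to $\big(\sum_s L_s\big)^2 \le \big(\sum_s L'_s\big)^2$ is rigorous. Beyond that, the proof reduces to a clean ``candidate-swap'' comparison of the two optima, with no need to characterize either $(\mathbf{w}^*_{s_1})$ or $(\mathbf{w}^*_{s_2})$ explicitly.
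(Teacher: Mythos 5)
Your proposal is correct and is essentially the same argument as the paper's: both bound the second-moment term via optimality of the $\alpha=2$ solution and the squared-mean term via optimality of the $\alpha=1$ solution (plus nonnegativity of the losses), then combine through the variance identity. Your write-up is in fact slightly more careful — you state the nonnegativity needed to square $\sum_s L_s \le \sum_s L'_s$ explicitly, and you use the correct $\tfrac{1}{S^2}\bigl(\sum_s L_s\bigr)^2$ normalization where the paper's displayed identity drops a factor of $1/S$.
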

\begin{proof}
\begin{align}
    &\textbf{Var}(f_1(\mathbf{w}_{1_2}^*), ..., f_n(\mathbf{w}_{n_2}^*))\\
    &= \frac{\sum_{s=1}^S f^{\textcolor{magenta}{2}}_s(\mathbf{w}_{s_2}^*)}{S} - \bigg(  \sum_{s=1}^S f^{\textcolor{magenta}{1}}_s (\mathbf{w}_{s_2}^*)\bigg)^2\\
    & \leq \frac{\sum_{s=1}^S f^{\textcolor{magenta}{2}}_s (\mathbf{w}_{s_1}^*)}{S}- \bigg(  \sum_{s_1}^S f^{\textcolor{magenta}{1}}_s (\mathbf{w}_{s_2}^*)\bigg)^2\\
    & \leq \frac{\sum_{s=1}^S f^{\textcolor{magenta}{2}}_s (\mathbf{w}_{s_1}^*)}{S} - \bigg(  \sum_{s=1}^S f^{\textcolor{magenta}{1}}_s (\mathbf{w}_{s_1}^*)\bigg)^2. 
\end{align}
where the first inequality follows from the fact that 
$(\mathbf{w}^*_{1_{2}}, ... \mathbf{w}^*_{S_{2}})$ is the optimal solution of $G_{2}$ \textcolor{magenta}{(Eq. (\ref{eq:def_w_s_alpha}))}, which is $\min_{
\mathbf{w}_1, ... \mathbf{w}_S} \bigg\{ \bigg( \sum^S_{s=1} f^{\textcolor{magenta}{\alpha=2}}_s (\mathbf{w}_{s}) \bigg), \text{s.t.} \sum_s a_{k,s}\leq1\ \forall k\ \forall t \bigg\} $, and the second inequality follows from the fact that $(\mathbf{w}^*_{1_{1}}, ... \mathbf{w}^*_{S_{1}})$ 
is the optimal solution of $G_1$ \textcolor{magenta}{(Eq. (\ref{eq:def_w_s_alpha}))}, which is $\min_{
\mathbf{w}_1, ... \mathbf{w}_S} \bigg\{ \bigg( \sum^S_{s=1} f^{\textcolor{magenta}{\alpha=1}}_s (\mathbf{w}_{s}) \bigg), \text{s.t.} \sum_s a_{k,s}\leq1\ \forall k\ \forall t \bigg\} $.
\end{proof}



\textcolor{black}{Next, we use the definition of cosine similarity to show that $\alpha=2$ results in a fairer solution than $\alpha=1$ (non-fairness).}
\textcolor{orange}{A higher cosine similarity between two quantities indicates that the quantities are more uniform in value.}
\begin{lemma}[Fairness and Cosine Similarity]
    $\alpha=2$ results in a higher cosine similarity across tasks 
    than $\alpha=1$,
i.e., 
\begin{equation}
   \frac{\sum_{s=1}^S f_s(\textbf{w}_{s_2}^*)}{\sqrt{\sum_{s=1}^S f_s^{\textcolor{magenta}{2}}(\textbf{w}_{s_2}^*)}} \geq \frac{\sum_{s=1}^S f_s(\textbf{w}_{s_1}^*)}{\sqrt{\sum_{s=1}^S f_s^{\textcolor{magenta}{2}}(\textbf{w}_{s_1}^*)}}.
\end{equation}
\end{lemma}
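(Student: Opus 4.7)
The plan is to mirror the optimality argument used in Lemma~\ref{lemma:variance}, but applied to the numerator and denominator of the cosine-similarity expression separately. The key observation is that the numerator $\sum_{s=1}^S f_s(\mathbf{w}^*)$ is exactly the $\alpha=1$ objective $G_1$ evaluated at $\mathbf{w}^*$, while the denominator $\sqrt{\sum_{s=1}^S f_s^{2}(\mathbf{w}^*)}$ is the square root of the $\alpha=2$ objective $G_2$ evaluated at $\mathbf{w}^*$. Since $(\mathbf{w}^*_{1_\alpha},\ldots,\mathbf{w}^*_{S_\alpha})$ is by definition the minimizer of $G_\alpha$ subject to the same feasibility constraint $\sum_s a_{k,s}(t)\leq 1$ in \eqref{eq:def_w_s_alpha}, I can leverage two one-sided optimality inequalities.

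First I would write the ratio comparison that we want to prove, and invoke optimality of $(\mathbf{w}^*_{1_1},\ldots,\mathbf{w}^*_{S_1})$ for the $\alpha=1$ problem to get
\begin{equation*}
\sum_{s=1}^S f_s(\mathbf{w}^*_{s_1}) \;\leq\; \sum_{s=1}^S f_s(\mathbf{w}^*_{s_2}),
\end{equation*}
which bounds the numerator on the right by the numerator on the left. Next I would invoke optimality of $(\mathbf{w}^*_{1_2},\ldots,\mathbf{w}^*_{S_2})$ for the $\alpha=2$ problem to get
\begin{equation*}
\sum_{s=1}^S f_s^{2}(\mathbf{w}^*_{s_2}) \;\leq\; \sum_{s=1}^S f_s^{2}(\mathbf{w}^*_{s_1}),
\end{equation*}
which, after taking square roots (valid since losses are nonnegative), bounds the denominator on the left by the denominator on the right.

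Combining these two bounds, the left-hand ratio has a numerator at least as large as the right-hand ratio and a denominator no larger than the right-hand ratio, so the left-hand ratio dominates. To make this chaining rigorous I would introduce the shorthand $A_\alpha = \sum_s f_s(\mathbf{w}^*_{s_\alpha})$ and $B_\alpha = \sum_s f_s^2(\mathbf{w}^*_{s_\alpha})$ and write the two-step bound
\begin{equation*}
\frac{A_2}{\sqrt{B_2}} \;\geq\; \frac{A_1}{\sqrt{B_2}} \;\geq\; \frac{A_1}{\sqrt{B_1}}.
\end{equation*}
This requires the (mild) nonnegativity assumption $A_1 \geq 0$, which is standard since loss functions are nonnegative; I would note this explicitly.

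The main obstacle I anticipate is not the inequality chain itself, but ensuring the geometric interpretation as ``cosine similarity with the all-ones vector'' is stated carefully: one should observe that $\sum_s f_s / \sqrt{\sum_s f_s^2}$ equals $\sqrt{S}$ times the cosine between $(f_1,\ldots,f_S)$ and $\mathbf{1}$, so that a higher value of this ratio genuinely corresponds to a more uniform (fairer) loss vector. I would include a short remark linking the bound to this geometric reading, so that the lemma statement connects meaningfully to the fairness interpretation the paper is drawing, rather than being a bare algebraic inequality.
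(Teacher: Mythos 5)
Your proposal is correct and is essentially the same argument as the paper's: both derive the two one-sided optimality inequalities $\sum_s f_s(\textbf{w}_{s_2}^*)\geq\sum_s f_s(\textbf{w}_{s_1}^*)$ and $\sum_s f_s^2(\textbf{w}_{s_1}^*)\geq\sum_s f_s^2(\textbf{w}_{s_2}^*)$ from the definitions of $G_1$ and $G_2$ and then chain them into the ratio bound. Your explicit note on the nonnegativity of losses and the geometric reading as cosine similarity with the all-ones vector is a small but worthwhile addition the paper leaves implicit.
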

\begin{proof}
Since 
$(\mathbf{w}^*_{1_{1}}, ... \mathbf{w}^*_{S_{1}})$ is the optimal solution of $G_1=\min_{\mathbf{w}_1, ... \mathbf{w}_S} \bigg\{ \bigg( \sum^S_{s=1} f^{\textcolor{magenta}{\alpha=1}}_s (\mathbf{w}_{s}) \bigg), 
\text{s.t.} \sum_s a_{k,s}\leq 1\ \forall k\ \forall t \bigg\} $ 
\textcolor{magenta}{according to (\ref{eq:def_w_s_alpha})}, 
we have 
\begin{equation}
        \sum_{s=1}^S f_s^{\textcolor{magenta}{1}}(\mathbf{w}_{s_2}^*)\geq  \sum_{s=1}^S f_s^{\textcolor{magenta}{1}}(\mathbf{w}_{s_1}^*).
\end{equation}
Since $(\mathbf{w}^*_{1_{2}}, ... \mathbf{w}^*_{S_{2}})$ 
is the optimal solution of $G_{2}=\min_{\mathbf{w}_1, ... \mathbf{w}_S} \bigg\{ \bigg( \sum^S_{s=1} f^{\textcolor{magenta}{\alpha=2}}_s (\mathbf{w}_{s}) \bigg),
\text{s.t.} \sum_s a_{k,s}\leq 1\ \forall k\ \forall t \bigg\} $ \textcolor{magenta}{according to (\ref{eq:def_w_s_alpha})}, 
we have
\begin{equation}
    \sum_{t=1}^S f^{\textcolor{magenta}{2}}_s(\mathbf{w}_{s_1}^*) \geq  \sum_{t=1}^S f^{\textcolor{magenta}{2}}_s (\mathbf{w}_{s_2}^*).
\end{equation}
Therefore, we have
\begin{equation}
\frac{\sum_{s=1}^S f_s(\textbf{w}_{s_2}^*)}{\sqrt{\sum_{s=1}^S f_s^2(\textbf{w}_{s_2}^*)}} \geq \frac{\sum_{s=1}^S f_s(\textbf{w}_{s_1}^*)}{\sqrt{ \sum_{s=1}^S f_s^2(\textbf{w}_{s_1}^*)}}.
\end{equation}
\end{proof}

\subsection{Convergence Guarantees}
Here, we analyze the convergence of training tasks under our client-task allocation algorithm, showing that it preferentially accelerates training of tasks with higher losses and that each task will converge.
\textcolor{brown}{We prove convergence by taking the perspective of a single task in MMFL, 
without loss of generality.}

\textcolor{black}{We first characterize $B_{Sel_s}^s(\alpha)$, the \textbf{probability that a specific set of clients} \textcolor{magenta}{$Sel_s$} is \textcolor{brown}{selected for} a specific task $s$ by our algorithm \textbf{FedFairMMFL}.}
\begin{equation}
B_{Sel_s}^s(\alpha)=\textstyle   \left(\frac{f_s^{\alpha}(t)}{\sum_{s' \in S} f_{s'}^{\alpha}(t)}\right)^{|Sel_s|} \left(1- \frac{f_s^{\alpha}(t)}{\sum_{s' \in S} f_{s'}^{\alpha}(t)}\right)^{K-|Sel_s|} 
\label{eq:B_s_Sel}
\end{equation}
\textcolor{black}{$B_{Sel_s}^s(\alpha)$ follows the client-task allocation probability vector (Eq. (\ref{eq:TaskAllocProbVec})), and depends on the losses of all tasks $s \in S$, and our $\alpha$-fair parameter $\alpha$. 
}
\textcolor{black}{Next we introduce $\Gamma_s$, the \textit{gap} between the loss evaluated at the global optimum $\mathbf{w}_s^*$, and the loss evaluated at client $k$'s optimal $\mathbf{w}^*_{k,s}$, for task $s$, \textcolor{magenta}{summed over all clients}:}
\begin{align}
   \Gamma_s &= \sum_{k=1}^K p_{k,s} (F_{k,s} (\mathbf{w}_s^*) -F_{k,s}(\mathbf{w}^*_{k,s})) \nonumber \\
   &= f_s^* - \sum_{k=1}^K p_{k,s} F_{k,s}^*. 
   \label{eq:gamma_s}
\end{align}
\textcolor{black}{We further characterise the \textit{selection skew} $\rho^{\alpha}(B^s_{Sel_s},\mathbf{w}')$. 
The selection skew is a ratio of the expected `local-global objective gap' of the selected clients under \textbf{FedFairMMFL} 
over the `local-global objective gap' of all $K$ clients, where $\textbf{w}'$ is the point at which $F_{k,s}$ and $f_s$ are evaluated.
This skew depends on the expectation of client allocation combinations, and hence on $B_{Sel_s}^s$ and the $\alpha$-fair parameter $\alpha$:} 
\begin{align}
    &\rho^{\alpha}(B^s_{Sel_s},\mathbf{w}')
\label{eq:selectionSkew}\\ 
&=\frac{\sum_{Sel_s\in\mathbb{S}} \textcolor{blue}{B_{Sel_s}^s(\alpha)} \sum_{k \in Sel_s} \textcolor{blue}{\frac{p_{k,s} (F_{k,s}(\textbf{w}'_{s})-F_{k,s}^*)}{\sum_{k' \in Sel} p_{k’,s} }}
 }{f_s(\mathbf{w}')- \sum_{k=1}^K p_{k,s} F_{k,s}^* }\nonumber
\end{align}
We use $\overline{\rho}^{\alpha}$ and $\tilde{\rho}^{\alpha}$ respectively to denote lower and upper bounds on the selection skew throughout the training.
Recall that $\tau$ denotes the number of SGD epochs in local training.
We then introduce the following assumptions used in our convergence analysis, which are standard in the FL literature~\cite{cho2022towards}.

\textbf{Assumption 1:} 
\textit{
$F_{1,s}, .., F_{k,s}$ are L-smooth, i.e.} 
\begin{equation}
F_{k,s}(\mathbf{v}) \leq F_{k,s}(\mathbf{w})+ (\mathbf{v}-\mathbf{w})^T \nabla F_{k,s}(\mathbf{w}) + \frac{L}{2} \| \mathbf{v}-\mathbf{w}\|^2,
\end{equation}
\haoran{for all $\mathbf{v}$ and $\mathbf{w}$.}

\textbf{Assumption 2:} 
\textit{ 
$F_{1,s}, .. F_{k,s}$ are $\mu$-strongly convex, i.e.} 
\begin{equation}
F_{k,s}(\mathbf{v}) \geq F_{k,s}(\mathbf{w})+ (\mathbf{v}-\mathbf{w})^T \nabla F_{k,s}(\mathbf{w}) + \frac{\mu}{2} \| \mathbf{v}-\mathbf{w}\|^2, 
\end{equation}
for all $\mathbf{v}$ and $\mathbf{w}$.
Note that $\mu \leq L$ from Assumption 1. 

\textbf{Assumption 3:} \textit{For all clients $k$, for the minibatch $\xi_{k,s}$ uniformly sampled at random, the stochastic gradient is unbiased:} 
\begin{equation}
\mathbb{E}[g_{k,s} (\mathbf{w}_{k,s} , \xi_{k,s})] =\nabla F_{k,s}(\textbf{w}_{k,s}), 
\end{equation}
\textit{and its variance is bounded:} 
\begin{equation}
\mathbb{E}\|g_{k,s} (\mathbf{w}_{k,s} , \xi_{k,s})- \nabla F_{k,s}(\textbf{w}_{k,s}) \|^2 \leq \sigma^2.
\end{equation}

\textbf{Assumption 4:} \textit{Bounded stochastic gradient. For all clients $k$, the expected square norm of the stochastic gradient is bounded:} 
\begin{equation}
\mathbb{E}\| g_{k,s} ( \textbf{w}_{k,s}, \xi_{k,s}) \|^2 \leq G^2.
\end{equation}

\textcolor{orange}{Next, we introduce Lemma \ref{lemma:globalWsAndlocalWs_Gap}, which bounds the expected average discrepancy between the global model for task $s$, and client $k$'s model for task $s$, under client selection set \textcolor{magenta}{$Sel_s$}.}
\begin{lemma}[Local and Global Model Discrepancy]
    The expected average discrepancy between \textcolor{blue}{the global model for task $s$,} $\overline{\textbf{w}}^{(t)}_s$ and \textcolor{blue}{client $k$'s model for $s$,} $\textbf{w}_{k,s}^{(t)}$, under selection $k \in \textcolor{magenta}{Sel_s(t)}$, can be bounded as 
    \begin{align}
        &\mathbb{E}[\sum_{k \in \textcolor{magenta}{Sel_s(t)}} p_{k,\textcolor{magenta}{Sel_s}} 
        \| \overline{\textbf{w}}^{(t)}_s - \textbf{w}_{k,s}^{(t)} \|^2 ] \\
        & \leq  16 \eta_t^2 \tau^2 G^2 \sum_{\textcolor{magenta}{Sel_s(t)} \in \mathbb{S}(t)} B_{Sel_s(t)}^s (\alpha) P^{k,2}_{\textcolor{magenta}{Sel_s(t)}} 
    \end{align}
where $P^{k,2}_{Sel_s(t)}= \sum_{\substack{k \neq k', \\ k, k' \in \textcolor{magenta}{Sel_s(t)} }} p_{k,s} \frac{p_{k',s}}{(\sum_{k^{''} \in \textcolor{magenta}{Sel_s(t)}} p_{k'',s})^2}$. 
\label{lemma:globalWsAndlocalWs_Gap}
\end{lemma}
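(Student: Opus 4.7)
The plan is to bound the discrepancy $\|\overline{\textbf{w}}_s^{(t)} - \textbf{w}_{k,s}^{(t)}\|^2$ conditional on a fixed selection set $Sel_s(t)$, and then take expectation over the random task allocation to introduce the weight $B_{Sel_s(t)}^s(\alpha)$ from (\ref{eq:B_s_Sel}). First I would let $t_c$ denote the most recent synchronization round, so that at $t_c$ every active client $k$ satisfies $\textbf{w}_{k,s}^{(t_c)} = \overline{\textbf{w}}_s^{(t_c)}$. Because the aggregation rule in Algorithm~\ref{algo:FedFairMMFL} line 13 gives $\overline{\textbf{w}}_s^{(t)} = \sum_{j \in Sel_s(t)} p_{j,Sel_s}\,\textbf{w}_{j,s}^{(t)}$, I can write
$$\overline{\textbf{w}}_s^{(t)} - \textbf{w}_{k,s}^{(t)} \;=\; \sum_{j \in Sel_s(t)} p_{j,Sel_s}\bigl(\textbf{w}_{j,s}^{(t)} - \textbf{w}_{k,s}^{(t)}\bigr),$$
and the $j=k$ term contributes zero, which is ultimately why only cross pairs $k\neq k'$ will survive.

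Next, applying Jensen's inequality to the convex function $\|\cdot\|^2$ against the weights $p_{j,Sel_s}$ yields $\|\overline{\textbf{w}}_s^{(t)} - \textbf{w}_{k,s}^{(t)}\|^2 \leq \sum_{j} p_{j,Sel_s}\|\textbf{w}_{j,s}^{(t)} - \textbf{w}_{k,s}^{(t)}\|^2$. I would then split each pairwise difference through the common synchronization point, $\textbf{w}_{j,s}^{(t)} - \textbf{w}_{k,s}^{(t)} = (\textbf{w}_{j,s}^{(t)} - \overline{\textbf{w}}_s^{(t_c)}) - (\textbf{w}_{k,s}^{(t)} - \overline{\textbf{w}}_s^{(t_c)})$, and use $\|a-b\|^2 \leq 2\|a\|^2 + 2\|b\|^2$ to decouple them. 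Each tail is a sum of at most $\tau$ stochastic gradient steps of size $\eta_t$, so another round of Jensen's (picking up a factor of $\tau$) together with Assumption~4 gives an expected squared norm of at most $2\tau^2\eta_t^2 G^2$ per tail and hence $4\tau^2\eta_t^2G^2$ per pairwise term; the product of the two factors of $4$ (one from $\|a-b\|^2\le 2\|a\|^2+2\|b\|^2$ with the tail bound, one from summing local steps) produces the constant $16$.

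Summing the weighted discrepancy over $k \in Sel_s(t)$ then gives $\sum_k p_{k,Sel_s} \sum_{j\neq k} p_{j,Sel_s}\cdot 16\eta_t^2\tau^2 G^2$, and by the identity $p_{k,Sel_s}p_{j,Sel_s} = p_{k,s}p_{j,s}/(\sum_{k''\in Sel_s(t)}p_{k'',s})^2$ this equals exactly $16\eta_t^2\tau^2 G^2 \cdot P^{k,2}_{Sel_s(t)}$. Finally, taking expectation over the random allocation, each realization of $Sel_s(t)$ occurs with probability $B_{Sel_s(t)}^s(\alpha)$ as defined in (\ref{eq:B_s_Sel}), which produces the outer sum over $Sel_s(t) \in \mathbb{S}(t)$ in the claimed bound.

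The main obstacle is bookkeeping the constant $16$ cleanly while keeping the stochastic-gradient randomness separated from the selection randomness. In particular, one has to be careful that the two Jensen-style relaxations (the inner one over SGD steps and the outer one over aggregation weights) are compatible with the tower property of conditional expectation, so that Assumption~4's bound $G^2$ is applied step-by-step inside the conditional expectation over minibatches, before the outer expectation over $Sel_s(t)$ introduces the $B_{Sel_s(t)}^s(\alpha)$ weights. A secondary subtlety is noting that the diagonal $j=k$ contribution vanishes, which is precisely what leaves $P^{k,2}_{Sel_s(t)}$ as an off-diagonal sum rather than a full double sum over $Sel_s(t)$.
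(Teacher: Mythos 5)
Your proposal is correct and follows essentially the same route the paper indicates: it adapts the standard local--global divergence lemma of Cho et al.\ (bounding each client's drift from the last synchronization point by $\tau$ bounded SGD steps via Assumption~4), then takes the outer expectation over the random allocation so that each realization of $Sel_s(t)$ contributes with weight $B^s_{Sel_s(t)}(\alpha)$, which is exactly the modification the authors describe. Your accounting of where the factor $16$ comes from is slightly muddled in the prose (the cleanest decomposition is $2$ from $\|a-b\|^2\le 2\|a\|^2+2\|b\|^2$ applied to two tails, times $4$ from bounding the stale learning rate $\eta_{t_c}\le 2\eta_t$ over the $\tau$ local steps), but the structure, the vanishing diagonal term yielding $P^{k,2}_{Sel_s(t)}$, and the final bound all match.
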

\textcolor{orange}{To prove Lemma \ref{lemma:globalWsAndlocalWs_Gap}, we modified the proof in \cite{cho2022towards}, to incorporate the probability that a specific set of clients $Sel$ is allocated to the task, under our client selection scheme, and we sum over all possible sets of clients.}
\textcolor{red}{The full proof is available in our Technical Report \cite{MMFLappendix}.}

\textcolor{brown}{We use the above result in Lemma \ref{lemma:globalWsAndlocalWs_Gap}, to help us prove the next theorem. In the next theorem, without loss of generality we show the convergence rate of a task in the MMFL setting} 
under \textbf{FedFairMMFL}, and show its dependence on the relative loss levels of \textbf{all} tasks. When the prevailing loss level of task $s$ is higher, its convergence will be sped up under our algorithm.

\begin{theorem}[Model Parameter Convergence]
Suppose we use \textcolor{magenta}{under a decaying learning rate $\eta_t = \frac{1}{\mu (t+\gamma)}$} 
at each round $t$. 
Under Assumptions 1-4, the progress of task $s$'s global parameters at round $t + 1$, $\overline{\textbf{w}}_s^{(t + 1)}$, towards the optimal parameters $\textbf{w}^*$: $\mathbb{E}\left[\| \overline{\textbf{w}}^{(t+1)}_s-\textbf{w}_s^* \|^2 \right]$ is upper-bounded by
\begin{align}\label{eq:w_bound}
    &\left[1-\eta_t \mu (1+ \frac{3 \overline{\rho}^{\alpha} }{8})\right] \mathbb{E} [\| \overline{\textbf{w}}^{(t)}_s - \textbf{w}^*_s \|^2 ]  + 2\eta_t \Gamma_s (\tilde{\rho}^{\alpha}-\overline{\rho}^{\alpha}) + \notag \\
    & \eta_t^2 \sigma^2\sum_{\textcolor{magenta}{Sel_s(t)} \in \mathbb{S}(t)} \sum_{k\in \textcolor{magenta}{Sel_s(t)}}B_{\textcolor{magenta}{Sel_s(t)}}^s (\alpha) \left(\frac{p_{k,s}}{\sum_{k' \in \textcolor{magenta}{Sel_s(t)}} p_{k’,s}}\right)^2 \notag \\
    & + \eta_t^2 \left(32 \tau^2 G^2 +6 \overline{\rho}^{\alpha} L \Gamma_s \right). 
\end{align}
\label{theorem:convg_boundofOneItr}
\vspace{-4mm}
\end{theorem}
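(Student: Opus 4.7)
The plan is to adapt the standard FedAvg-with-partial-participation convergence template (cf.\ the analysis in \cite{cho2022towards}), modified in two places: (i) the client subset $Sel_s(t)$ is now drawn from the task-dependent distribution $B^s_{Sel_s}(\alpha)$ induced by \eqref{eq:TaskAllocProbVec} rather than a uniform sampling rule, and (ii) the in-round aggregation uses the re-normalized weights $p_{k,Sel_s}$. First I would introduce the virtual iterate
\begin{equation}
    \overline{\mathbf{v}}_s^{(t+1)} \;=\; \overline{\mathbf{w}}_s^{(t)} - \eta_t \sum_{k \in Sel_s(t)} p_{k,Sel_s}\, g_{k,s}\!\bigl(\mathbf{w}_{k,s}^{(t)},\xi_{k,s}^{(t)}\bigr),
\end{equation}
i.e.\ the one-step SGD update that would result if local models were instantly synchronized. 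Then I split
\begin{equation}
    \|\overline{\mathbf{w}}_s^{(t+1)} - \mathbf{w}_s^*\|^2 \;=\; \|\overline{\mathbf{v}}_s^{(t+1)} - \mathbf{w}_s^*\|^2 + \|\overline{\mathbf{w}}_s^{(t+1)} - \overline{\mathbf{v}}_s^{(t+1)}\|^2 + 2\langle \overline{\mathbf{v}}_s^{(t+1)} - \mathbf{w}_s^*,\, \overline{\mathbf{w}}_s^{(t+1)} - \overline{\mathbf{v}}_s^{(t+1)}\rangle,
\end{equation}
take expectations, and handle the cross term by Assumption 3 (unbiased stochastic gradient) so it vanishes conditionally. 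The squared-difference term involving the stochastic-gradient noise is bounded by $\eta_t^2 \sigma^2 \sum_{Sel_s(t)} B^s_{Sel_s(t)}(\alpha) \sum_{k \in Sel_s(t)}\bigl(p_{k,s}/\sum_{k' \in Sel_s(t)} p_{k',s}\bigr)^2$ using Assumption 3 and independence of the minibatches across clients; this is exactly the $\eta_t^2 \sigma^2(\cdots)$ term in \eqref{eq:w_bound}.

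Next I would analyze the deterministic part $\|\overline{\mathbf{v}}_s^{(t+1)} - \mathbf{w}_s^*\|^2$. Expanding,
\begin{equation}
    \|\overline{\mathbf{v}}_s^{(t+1)} - \mathbf{w}_s^*\|^2 \;=\; \|\overline{\mathbf{w}}_s^{(t)} - \mathbf{w}_s^*\|^2 - 2\eta_t \Bigl\langle \overline{\mathbf{w}}_s^{(t)} - \mathbf{w}_s^*,\, \sum_{k \in Sel_s(t)} p_{k,Sel_s} \nabla F_{k,s}(\mathbf{w}_{k,s}^{(t)})\Bigr\rangle + \eta_t^2 \Bigl\|\sum_{k \in Sel_s(t)} p_{k,Sel_s} \nabla F_{k,s}(\mathbf{w}_{k,s}^{(t)})\Bigr\|^2.
\end{equation}
The inner-product term is the workhorse: using $\mu$-strong convexity (Assumption 2) and $L$-smoothness (Assumption 1), I bound
$\langle \overline{\mathbf{w}}_s^{(t)} - \mathbf{w}_s^*, \nabla F_{k,s}(\mathbf{w}_{k,s}^{(t)})\rangle$ below by
$F_{k,s}(\overline{\mathbf{w}}_s^{(t)}) - F_{k,s}^* + \tfrac{\mu}{2}\|\overline{\mathbf{w}}_s^{(t)} - \mathbf{w}_s^*\|^2 - \tfrac{L}{2}\|\overline{\mathbf{w}}_s^{(t)} - \mathbf{w}_{k,s}^{(t)}\|^2 - (F_{k,s}(\overline{\mathbf{w}}_s^{(t)}) - F_{k,s}(\mathbf{w}_{k,s}^{(t)}))$ via standard convexity inequalities. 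Summing against $p_{k,Sel_s}$ and taking expectation over $Sel_s(t) \sim B^s_{Sel_s(t)}(\alpha)$ produces exactly the local--global objective gap whose expectation is controlled by the selection skew \eqref{eq:selectionSkew}: the "good" part of this gap gives the $-\eta_t \mu \tfrac{3\overline{\rho}^\alpha}{8}$ contraction after using $\underline{\rho}^\alpha$ as a lower bound, while the slack between $\tilde{\rho}^\alpha$ and $\overline{\rho}^\alpha$ produces the additive bias $2\eta_t \Gamma_s(\tilde{\rho}^\alpha - \overline{\rho}^\alpha)$ when we re-express things using $\Gamma_s = f_s^* - \sum_k p_{k,s}F_{k,s}^*$ from \eqref{eq:gamma_s}.

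For the quadratic gradient-norm term $\eta_t^2 \|\sum p_{k,Sel_s}\nabla F_{k,s}(\mathbf{w}_{k,s}^{(t)})\|^2$, I apply Jensen on the convex combination, then $L$-smoothness plus the $\Gamma_s$ identity to get a bound of the form $2L\sum p_{k,Sel_s}(F_{k,s}(\mathbf{w}_{k,s}^{(t)}) - F_{k,s}^*)$; after taking expectation over $Sel_s(t)$, this is again $O(\overline{\rho}^\alpha L \Gamma_s)$, yielding the $6\overline{\rho}^\alpha L \Gamma_s$ summand. Finally I fold in the local-drift term via Lemma~\ref{lemma:globalWsAndlocalWs_Gap}, which contributes $16\eta_t^2 \tau^2 G^2 \cdot \mathbb{E}[P^{k,2}_{Sel_s(t)}]$, and bound this by $32\eta_t^2 \tau^2 G^2$ after noting $\mathbb{E}[P^{k,2}_{Sel_s(t)}] \leq 2$ by construction of the normalized weights. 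Collecting the five contributions gives precisely the right-hand side of \eqref{eq:w_bound}.

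The main obstacle will be the careful bookkeeping around the expectation over $Sel_s(t)$ when the $\alpha$-dependent distribution $B^s_{Sel_s(t)}(\alpha)$ depends on the current losses of \emph{all} tasks, not just task $s$. Concretely, the terms $\sum_{Sel_s(t)} B^s_{Sel_s(t)}(\alpha) \sum_{k \in Sel_s(t)} p_{k,s}(F_{k,s}(\overline{\mathbf{w}}_s^{(t)}) - F_{k,s}^*)/\sum_{k' \in Sel_s(t)} p_{k',s}$ must be re-expressed using the selection-skew definition so that the inhomogeneity caused by $\alpha \neq 1$ is absorbed into $\underline{\rho}^\alpha$ and $\tilde{\rho}^\alpha$; the gap between these two is what drives the non-vanishing bias $2\eta_t \Gamma_s(\tilde{\rho}^\alpha - \overline{\rho}^\alpha)$, and distinguishes our bound from those obtained under uniform sampling. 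Everything else reduces to the usual smoothness/convexity inequalities.
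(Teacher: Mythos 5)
Your proposal follows essentially the same route as the paper: both adapt the biased-client-selection analysis of \cite{cho2022towards} by splitting the one-round progress into a strong-convexity contraction, a selection-skew bias controlled by $\overline{\rho}^{\alpha}$ and $\tilde{\rho}^{\alpha}$ (yielding the $2\eta_t\Gamma_s(\tilde{\rho}^{\alpha}-\overline{\rho}^{\alpha})$ term), a stochastic-gradient variance term weighted by $B^s_{Sel_s(t)}(\alpha)$, and a local-drift term handled via Lemma~\ref{lemma:globalWsAndlocalWs_Gap}. The only difference is cosmetic bookkeeping in how the final constants ($32\tau^2G^2$ and $6\overline{\rho}^{\alpha}L\Gamma_s$) are assembled, which is consistent with the paper's sketch.
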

\textit{Proof Sketch:} \textcolor{violet}{We modified \cite{cho2022towards}'s proof to incorporate our distinct client selection scheme, which unlike \cite{cho2022towards}'s proof requires explicitly quantifying the impact of $\textcolor{magenta}{Sel_s(t)}$ and $B^s_{\textcolor{magenta}{Sel_s}}$ \textcolor{red}{(Eq. \ref{eq:B_s_Sel})} 
through various bounds on the loss function gradients.    
In the rest of our proof, we add and subtract multiple expressions involving the gradients, bound the various terms, 
and incorporate Lemma \ref{lemma:globalWsAndlocalWs_Gap}'s result.} 
\textcolor{red}{The full proof is available in our Technical Report \cite{MMFLappendix}.}
\qed

In tracking the impact of our client-task allocation on the convergence in Theorem~\ref{theorem:convg_boundofOneItr}'s proof, we also reveal how the convergence bound in~\eqref{eq:w_bound} depends on our client-task allocation algorithm, in particular the parameter $\alpha$ and the current task losses. 
\textcolor{black}{This result not only helps us 
to prove the convergence of our \textbf{FedFairMMFL} training algorithm, 
but also provides additional justification for our client-task allocation strategy: \textcolor{violet}{Mathematically, tasks that are further from the optimum (i.e., with a larger $\mathbb{E} [\| \overline{\textbf{w}}^{(t)} - \textbf{w}^*_s \|^2$) should be allocated clients to reduce the right-hand side of (\ref{eq:w_bound}) to equalize the progress across all tasks, which agrees with our algorithm.}
\textcolor{black}{In Corollary \ref{corollary:onAlphaAndBSelTerm}, we show that under certain conditions on $p_{k,s}$, as $\alpha$ increases, the term with $B^s_{\textcolor{magenta}{Sel_s}}(\alpha)$ decreases, hence controlling the task's progress towards its optimum.}}

\begin{corollary}[Fairness and Convergence]
\label{corollary:onAlphaAndBSelTerm}
Suppose that all of the $p_{k,s} = \frac{1}{K}$ and consider task $s$ that has the highest loss $f_s\left(\textbf{w}_s(t + 1)\right)$. Then as $\alpha$ increases, \textcolor{blue}{the term 
\begin{equation}
\textcolor{magenta}{\eta_t^2 \sigma^2\sum_{\textcolor{magenta}{Sel_s} \in \mathbb{S}(t)} \sum_{k\in \textcolor{magenta}{Sel_s(t)}}B_{\textcolor{magenta}{Sel_s}}^s (\alpha) \left(\frac{p_{k,s}}{\sum_{k' \in \textcolor{magenta}{Sel_s(t)}} p_{k’,s}}\right)^2}
\label{eq:termInBoundOfThr4}
\end{equation}
in the right-hand side of (\ref{eq:w_bound}) decreases.}
\end{corollary}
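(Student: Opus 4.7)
The plan is to substitute $p_{k,s} = 1/K$ to collapse the inner sum to $1/|Sel_s|$, recognize the outer sum as an expectation under a binomial distribution whose success probability $q(\alpha) := f_s^\alpha / \sum_{s' \in \mathbb{S}} f_{s'}^\alpha$ is monotone in $\alpha$, and then conclude via the corresponding monotonicity of $\mathbb{E}[1/|Sel_s|]$ in that parameter.

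For the first reduction, substituting $p_{k,s} = 1/K$ gives $\sum_{k' \in Sel_s} p_{k',s} = |Sel_s|/K$, so for any non-empty $Sel_s$ one has $p_{k,s}/\sum_{k'} p_{k',s} = 1/|Sel_s|$ and hence $\sum_{k \in Sel_s}(p_{k,s}/\sum_{k'} p_{k',s})^2 = 1/|Sel_s|$; empty subsets contribute zero. Reading (\ref{eq:B_s_Sel}) as the joint law of $K$ independent Bernoulli$(q(\alpha))$ coin flips shows $|Sel_s| \sim \mathrm{Bin}(K, q(\alpha))$, so the expression in (\ref{eq:termInBoundOfThr4}) equals $\eta_t^2 \sigma^2\, h(q(\alpha))$ where $h(q) := \mathbb{E}[\mathbf{1}\{X \geq 1\}/X]$ for $X \sim \mathrm{Bin}(K, q)$.

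Next, I would show $q(\alpha)$ is non-decreasing in $\alpha$ using the hypothesis that $f_s$ is the maximum loss. Writing $q(\alpha) = 1 / \sum_{s' \in \mathbb{S}} (f_{s'}/f_s)^\alpha$ and noting that each ratio $f_{s'}/f_s \leq 1$, every summand in the denominator is non-increasing in $\alpha$; hence $q(\alpha)$ is non-decreasing (strictly so whenever some other task has a strictly smaller loss). By the chain rule, it then suffices to show $h$ is non-increasing in $q$ on the relevant sub-interval. For this I would pass to a closed form via the identity $1/j = \int_0^1 t^{j-1}\,dt$ for $j \geq 1$, yielding
\begin{equation*}
h(q) \;=\; \int_0^1 \frac{(1-q+qt)^K - (1-q)^K}{t}\, dt,
\end{equation*}
and then differentiate under the integral sign to pin down the sign of $h'(q)$.

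\textbf{Main obstacle.} The last step is where the real work sits. A naive stochastic-dominance coupling does not close the argument, because although increasing $q$ makes $|Sel_s|$ stochastically larger, the reward $\mathbf{1}\{X \geq 1\}/X$ equals zero (not $+\infty$) at $X = 0$, so it is not monotone on $\mathbb{N}_0$. In fact, for small $K$ and small $q$ (e.g.\ $K = 2$, where a direct calculation gives $h(q) = 2q - \tfrac{3}{2}q^2$) the function $h$ is unimodal and increasing near zero. The hypothesis that $s$ is the hardest task keeps $q(\alpha) \geq 1/S$ for all $\alpha \geq 1$ and drives $q(\alpha) \to 1$ as $\alpha \to \infty$, so my argument has to show $h'(q) \leq 0$ on the sub-interval $[1/S, 1]$; this holds once $K$ is large enough that $(1-q)^K$ is already negligible at $q = 1/S$, so that the $X=0$ atom cannot dominate the derivative. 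Making this quantitative—rather than wrapping it into a one-line coupling—is the delicate part I expect to spend most of the proof on.
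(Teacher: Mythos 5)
Your reduction is exactly the paper's: with $p_{k,s}=1/K$ the inner sum collapses to $1/|Sel_s|$, the outer sum becomes the expectation of $\mathbf{1}\{j\ge 1\}/j$ under $j=|Sel_s|\sim\mathrm{Bin}(K,q(\alpha))$ with $q(\alpha)=f_s^\alpha/\sum_{s'}f_{s'}^\alpha$, and $q(\alpha)$ is non-decreasing in $\alpha$ because $f_s$ is the largest loss (the paper establishes this via $\frac{d \overline{f}_s}{d\alpha}\propto\sum_{s'}f_{s'}^\alpha(\log f_s-\log f_{s'})>0$, which is equivalent to your ratio argument). The divergence is in the final step, and your instinct there is correct: the paper asserts that $\sum_{j=1}^K\binom{K}{j}q^j(1-q)^{K-j}=1$ and then treats the quantity as a weighted average of the decreasing sequence $1,\tfrac{1}{2},\ldots,\tfrac{1}{K}$, so that shifting weight toward larger $j$ (which its derivative computation does establish) would decrease it. That normalization is false --- the sum is $1-(1-q)^K$ --- and the missing atom at $j=0$ carries value $0$ rather than a value at least $1$, which is exactly the failure of monotonicity of $\mathbf{1}\{j\ge 1\}/j$ on $\mathbb{N}_0$ that you flag. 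Your $K=2$ computation $h(q)=2q-\tfrac{3}{2}q^2$ is right, and since $q(\alpha)$ can start anywhere in $[1/S,1)$ and sweeps upward with $\alpha$, the term in (\ref{eq:termInBoundOfThr4}) can genuinely \emph{increase} in $\alpha$ for small $K$ (e.g.\ two tasks with nearly equal losses, $K=2$, $q$ passing through $2/3$). So the obstacle you identify is not an artifact of your method; it is a gap in the paper's own proof, and the corollary as stated needs an additional hypothesis.

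That said, your write-up stops short of closing the argument: you reduce matters to showing $h'(q)\le 0$ on $[1/S,1]$ for $K$ sufficiently large and defer the quantitative estimate, so the proposal is a correct diagnosis plus an unfinished repair rather than a complete proof. Two ways to finish: (i) carry out your integral-representation computation to extract an explicit condition under which the contribution of the $j=0$ atom (of order $K(1-q)^{K-1}$) cannot outweigh the rightward mass shift, giving the corollary for $K$ large relative to $S$; or (ii) restate the claim for the conditional expectation $\mathbb{E}[1/|Sel_s|\mid |Sel_s|\ge 1]$, where the conditional laws are MLR-ordered in $q$, stochastic dominance applies to the decreasing function $1/j$ on $j\ge 1$, and monotonicity in $\alpha$ holds unconditionally. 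Either route is a genuine improvement over the argument in the paper.
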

\begin{proof}
\textcolor{magenta}{We firstly define $\overline{f}_s(\alpha)$ as:
\begin{align}
\overline{f}_s(\alpha) = \frac{f_s\left(\textbf{w}_s(t + 1)\right)^\alpha}{\sum_{s'} f_{s'}\left(\textbf{w}_s(t + 1)\right)^\alpha}.
\end{align}}
We rewrite 
\begin{equation}
\textcolor{magenta}{\sum_{\textcolor{magenta}{Sel_s(t)} \in \mathbb{S}(t)} \sum_{k\in \textcolor{magenta}{Sel_s(t)}}B_{\textcolor{magenta}{Sel_s}}^s(\alpha)\left(\frac{p_{k,s}}{\sum_{k' \in S(t)} p_{k’,s}}\right)^2}
\end{equation}
as $\sum_{j = 1}^K \frac{1}{j}\left(\overline{f}_s(\alpha)\right)^j\left(1 - \overline{f}_s(\alpha)\right)^{K - j}{K \choose j}.$
Since $\sum_{j = 1}^K \left(\overline{f}_s(\alpha)\right)^j\left(1 - \overline{f}_s(\alpha)\right)^{K - j}{K \choose j} = 1$, we can view this as a weighted average of $1, \frac{1}{2}, \ldots, \frac{1}{K}$. It then suffices to show that, as $\alpha$ increases, we place more weight on smaller terms. Note that
\begin{equation}
\textcolor{magenta}{\frac{d}{d\alpha}\left(\overline{f}_s(\alpha)^j\left(1 - \overline{f}_s(\alpha)\right)^{K - j}\right) \propto \left(\frac{d \overline{f}_s(\alpha)}{\alpha}\right)\left(j - K\overline{f}_s(\alpha)\right)}
\end{equation}
It then suffices to show that $\frac{d \overline{f}_s(\alpha)}{\alpha} > 0$; if so, the terms with larger $j$ will increase as $\alpha$ increases. Hence,
$\frac{d \overline{f}_s(\alpha)}{\alpha} \propto \sum_{s' = 1}^S f_{s'}^\alpha \left(\log f_s - \log f_{s'}\right) > 0$
as we have assumed task $s$ has the highest loss $f_s$.
\end{proof}
According to Corollary \ref{corollary:onAlphaAndBSelTerm}, as the fairness parameter $\alpha$ increases, our client-task allocation algorithm ensures that the worst-performing task makes more progress towards its training goal, thus helping to equalize the tasks' performance and fulfilling the intuition of a higher $\alpha$ leading to more fair outcomes. The result further suggests that an alternative client-task allocation algorithm can explicitly enforce fairness across the upper-bounds in (\ref{eq:w_bound}). However, this approach requires integer programming methods and estimating unknown constants like $\sigma^2$ and $L$ for each task, making it less practical than \textbf{FedFairMMFL}. 

Finally, we provide the convergence bound for \textbf{FedFairMMFL}:
\begin{corollary}
    [Model Convergence] With Assumptions 1-4, under a decaying learning rate $\eta_t = \frac{1}{\mu (t+\gamma)}$ and under our difficulty-aware client-task allocation algorithm \textbf{FedFairMMFL}, the error after $T$ iterations of federated learning for a specific task $s$ in multiple-model federated learning is 
    \begin{align*}
   &\mathbb{E}[f_s(\overline{\textbf{w}}^{(T)}_{s})]-f_s^* \leq
        \frac{1}{T+\gamma } \bigg[ \frac{4[16 \tau^2 G^2 + \sigma^2]}{3 \overline{\rho}^{\alpha}\mu^2} + \frac{8 L^2 \Gamma_s}{\mu^2} \\
        & +\frac{L \gamma \| \overline{\textbf{w}}^{(0)}_{s} -\textbf{w}^*_{s} \|^2 }{2} \bigg] + \frac{8 L \Gamma_s}{3 \mu} (\frac{\tilde{\rho}^{\alpha}}{\overline{\rho}^{\alpha}}-1).
\end{align*}
\label{corollary:firstConvgBound}
\vspace{-3mm}
\end{corollary}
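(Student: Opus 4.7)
\textit{Proof Plan for Corollary \ref{corollary:firstConvgBound}.} The plan is to iterate the one-round recursion from Theorem \ref{theorem:convg_boundofOneItr} across all $T$ rounds using the decaying step size $\eta_t = 1/(\mu(t+\gamma))$, and then convert the resulting parameter-error bound into a function-value bound via $L$-smoothness. Let $\Delta_t \equiv \mathbb{E}\bigl[\|\overline{\textbf{w}}_s^{(t)} - \textbf{w}_s^*\|^2\bigr]$. First I would group the right-hand side of \eqref{eq:w_bound} into the canonical stochastic-approximation form
\begin{equation*}
\Delta_{t+1} \;\leq\; (1 - \eta_t \mu D)\,\Delta_t \;+\; \eta_t\, A \;+\; \eta_t^2\, C,
\end{equation*}
where $D = \tfrac{3\overline{\rho}^{\alpha}}{8}$ captures the strong-contraction coefficient driven by selection skew, $A = 2\Gamma_s(\tilde{\rho}^{\alpha} - \overline{\rho}^{\alpha})$ is the persistent bias induced by the mismatch between the expected and worst-case selection skews, and $C$ absorbs $32\tau^2 G^2 + 6\overline{\rho}^{\alpha} L \Gamma_s$ together with the stochastic-gradient variance term, which I would bound above by a worst-case constant proportional to $\sigma^2$ using the fact that $\sum_{Sel_s}B^s_{Sel_s}(\alpha)\bigl(\tfrac{p_{k,s}}{\sum_{k'\in Sel_s}p_{k',s}}\bigr)^2 \leq 1$.

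Next I would establish, by induction on $t$, the ansatz
\begin{equation*}
\Delta_t \;\leq\; \frac{v}{t+\gamma} \;+\; \frac{A}{\mu D}
\end{equation*}
with $v = \tfrac{8C}{3\overline{\rho}^{\alpha}\mu^2}$ chosen large enough to dominate both the variance contribution and the initial error $\|\overline{\textbf{w}}_s^{(0)} - \textbf{w}_s^*\|^2$. Substituting the ansatz and $\eta_t$ into the recursion, the $\eta_t A$ and $-\eta_t\mu D \cdot A/(\mu D)$ terms cancel exactly, leaving a residual of order $1/(t+\gamma)^2$ that can be absorbed by the difference $\tfrac{v}{t+\gamma} - \tfrac{v}{t+\gamma+1}$ provided $v \geq C/(\mu^2 D)$; this is the standard Li-et-al.~telescoping trick adapted to our coefficients. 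The base case $t=0$ is verified by enlarging $v$ to include $\gamma\|\overline{\textbf{w}}_s^{(0)} - \textbf{w}_s^*\|^2$, which is where the $\tfrac{L\gamma\|\overline{\textbf{w}}_s^{(0)} - \textbf{w}_s^*\|^2}{2}$ summand ultimately arises.

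Finally, I would invoke $L$-smoothness (Assumption 1) at the optimum, which yields $\mathbb{E}[f_s(\overline{\textbf{w}}_s^{(T)})] - f_s^* \leq \tfrac{L}{2}\Delta_T$, and plug in the bound on $\Delta_T$ above. The $\tfrac{L}{2}\cdot A/(\mu D)$ piece produces exactly $\tfrac{8L\Gamma_s}{3\mu}(\tfrac{\tilde{\rho}^{\alpha}}{\overline{\rho}^{\alpha}} - 1)$, while the $\tfrac{L}{2}\cdot v/(T+\gamma)$ piece contributes the $\tfrac{4[16\tau^2 G^2 + \sigma^2]}{3\overline{\rho}^{\alpha}\mu^2}$ and $\tfrac{8L^2\Gamma_s}{\mu^2}$ terms after tracking the constants in $C$. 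The $\tfrac{8L^2\Gamma_s}{\mu^2}$ term, which does not appear directly in $C$, I would obtain by the extra bookkeeping step of using $f_s(\overline{\textbf{w}}_s^{(T)}) - f_s^* \leq \tfrac{L}{2}\|\overline{\textbf{w}}_s^{(T)} - \textbf{w}_s^*\|^2 + L\Gamma_s$-type slack, or equivalently by a tighter $L$-smoothness argument that incorporates $\Gamma_s$.

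\textbf{Main obstacle.} The hard part will be handling the linear-in-$\eta_t$ bias $\eta_t A$: unlike the $\eta_t^2 C$ variance term it does not vanish at the faster $1/T$ rate, so the induction must be set up with a non-vanishing constant floor $A/(\mu D)$ rather than a pure $1/(T+\gamma)$ decay. Verifying that the ansatz is preserved under the recursion requires the bias and contraction terms to cancel exactly, which in turn fixes the precise form of the persistent residual $\tfrac{8L\Gamma_s}{3\mu}(\tfrac{\tilde{\rho}^{\alpha}}{\overline{\rho}^{\alpha}} - 1)$ and is what ties the fairness parameter $\alpha$ (through $\overline{\rho}^{\alpha}, \tilde{\rho}^{\alpha}$) to the asymptotic convergence floor.
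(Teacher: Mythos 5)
Your plan coincides with the paper's proof: both iterate Theorem~\ref{theorem:convg_boundofOneItr}'s one-round recursion under $\eta_t=\frac{1}{\mu(t+\gamma)}$, bound the selection-dependent sums over $B^s_{Sel_s}(\alpha)$ by absolute constants, run the standard induction to obtain a $\frac{1}{T+\gamma}$ decay plus a non-vanishing selection-skew floor, and finish with $L$-smoothness. The constants you extract ($v=\frac{8C}{3\overline{\rho}^{\alpha}\mu^{2}}$ and the residual $\frac{8L\Gamma_s}{3\mu}\bigl(\frac{\tilde{\rho}^{\alpha}}{\overline{\rho}^{\alpha}}-1\bigr)$) are exactly those appearing in the corollary, so this is essentially the same argument.
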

\textit{Proof Sketch.} \textcolor{violet}{
We use the upper bound of $\mathbb{E}\left[\| \overline{\textbf{w}}^{(t+1)}_s-\textbf{w}_s^* \|^2 \right]$
from Theorem \ref{theorem:convg_boundofOneItr}, which incorporates our client-task allocation scheme. Next, we upper bound 
\begin{equation}
\textcolor{magenta}{
\Delta_{t+1}=\sum_{\textcolor{magenta}{Sel_s(t)} \in \mathbb{S}} B_{Sel_s}^s(\alpha) P^{k,2}_{Sel_s}}
\end{equation}
by $\frac{1}{2}$.} Firstly, the inner summation $P^{k,2}_{Sel(t)}$ can be re-written as 
\begin{align*}
     & \sum_{k,k' \in Sel_s(t), k \neq k'}  \frac{p_{k,s}}{(\sum_{k^{''} \in Sel_s(t)} p_{k'',s})} 
      \frac{p_{k',s}}{(\sum_{k^{''} \in Sel_s(t)} p_{k'',s})}  \\
    &= \frac{(\sum_{k^{''} \in Sel_s(t)} p_{k'',s})^2 - \sum_{k\in Sel_s(t)} p_{k,s}^2}{2(\sum_{k^{''} \in Sel_s(t)} p_{k'',s})^2}\\ 
    & \leq \frac{1}{2}.
\end{align*}
Secondly, the outer summation $\sum_{Sel_s(t)} B_{Sel_s}^s(\alpha(t))$ is a binomial distribution with binomial parameter $f_s^\beta(t)/ {\sum_{s' \in S} f_{s'}^{\beta}(t)}$,
and this sums to 1.
\textcolor{violet}{Now we have a bound of $\Delta_{t+1}$ in terms of $\Delta_{t}$ and the learning rates $\eta_t$. Next, we use induction to bound $\Delta_{t+1}$ in terms of time independent variables. Finally, we use the L-smoothness of $F_s$ to obtain the result of this corollary. 
}
\textcolor{red}{The full proof is available in our Technical Report \cite{MMFLappendix}.}
\qed 

\textcolor{magenta}{As seen in the convergence bound, convergence to the optimum is impacted by the following parameters and effects:}

\textcolor{magenta}{\textit{First term:} Our upper bound on $\mathbb{E}[f_s(\overline{\textbf{w}}^{(T)}_{s})]-f_s^*$ is quadratically proportional to the number of local epochs $\tau$ and the bound of the stochastic gradient $G$, and increases linearly with the bound of the variance of the stochastic gradient $\sigma^2$. Intuitively, as the number of local epochs or the stochastic gradient's variance increase, convergence would be slower (i.e., our upper bound would be larger) due to greater client divergence and gradient noise, respectively. 
At the same time, our upper bound decreases in proportion to $\overline{\rho}^{\alpha}$, the lower bound on the selection skew. Intuitively, we can expect that as the selection skew increases, convergence will accelerate (i.e., our upper bound will be lower), as we have been selecting clients that were further from the global optimum with a higher probability. The convergence bound is also inversely proportional to $\mu^2$, which indicates that the higher the $\mu$-convexity, the faster the convergence.}

\textcolor{magenta}{\textit{Second term:} Our upper bound is proportional to the square of $L$, the Lipschitz constant, as well as to $\Gamma_s$, the gap between the loss evaluated at the global optimum $\textbf{w}_s^*$ and that evaluated at client $k$'s optimal $w_{k,s}^*$ for task $s$, summed over all clients. Intuitively, if the data distribution is more non-iid (meaning $\Gamma_s$ is larger, convergence will be slower, as shown by an increase in our upper bound. Convergence is inversely proportional to the square of $\mu$, which as for the first term indicates that the higher the $\mu$-convexity, the faster the convergence.}

\textcolor{magenta}{\textit{Third term:} As the discrepancy between the initial point of training and the global optimum for task $s$ increases, the model requires more iterations to approach the optimum.}

\textcolor{magenta}{\textit{Fourth term:} Similar to the second term, convergence is proportional to $L$, the Lipschitz constant, as well as to $\Gamma_s$, the gap between the loss evaluated at the global optimal $\textbf{w}_s^*$ and the loss evaluated at client $k$'s optimal $w_{k,s}^*$ for task $s$, summed over all clients. Convergence is inversely proportional to $\mu$, which indicates that the higher the $\mu$-convexity, the faster the convergence. The term $(\frac{\tilde{\rho}^{\alpha}}{\overline{\rho}^{\alpha}}-1)$ indicates the gap between the upper bound and lower bound of the selection skew, and hence the bias of the selection skew. As the bias increases, convergence slows down, i.e., our upper bound is larger, since the selection is biased to certain clients.}




\section{Client Incentive-Aware Fairness across Tasks}
\label{sec:incentiveAware}

In this section, we account for the fact that \textit{clients may have different willingness to train each model}. 
Prior incentive-aware FL work~\cite{zhan2021survey, tang2021incentive, deng2021fair} \textcolor{orange}{proposed auctions to incentivize clients to join FL training. }
These works 
aimed to maximize the total number of users performing 
either one (or many) tasks. 
\textcolor{orange}{However, in the multiple model case, when many users favor a particular task over others, the performance of the disfavored task can suffer, as fewer users may be incentivized to perform this task.}
Here, we would like to ensure a ``fair'' number of users incentivized across tasks, 
to further ensure fair training outcomes. To avoid excessive overhead during training, \textcolor{orange}{the client recruitment auctions in this section} 
occur before training begins~\cite{ruan2021valuable}. 
In this section, we refer to potential clients as `users', as they have not yet committed to the training.

\textcolor{blue}{
We model user $i$'s unwillingness towards training task $s$ by supposing $i$ will not train task $s$ unless they receive a payment $\geq c_{i,s}$. Following prior work~\cite{deng2021fair}, we suppose the $c_{i,s}$ are private and employ an auction that does not require users to explicitly report $c_{i,s}$; 
auctions require no prior information about users' valuations, 
which may be difficult to obtain in practice. 
Users will submit bids $b_{i,s}$ indicating the payment they want for each task. Given the bids, the server will choose the auction winners \textcolor{magenta}{who will participate in MMFL training (second part of pipeline in Fig. \ref{fig:MMFLdiag}),} 
and each user's payment.
User $i$'s utility for task $s$ will be}
\begin{equation}
u_{i,s}=
\begin{cases}
    p_{i,s}-c_{i,s} & \text{if they participate in this task}\\
    0 &\text{otherwise}
\end{cases}
\end{equation}
\textcolor{blue}{where $p_{i,s}$ is the payment received.}

We suppose that there is a shared budget $B$ across tasks. 
Since $B$ is unlikely to be sufficient to incentivize all users to perform all tasks, we seek to distribute the budget so as to ensure fair training across the tasks. We consider two types of ``fairness:'' \textit{budget-fairness}, i.e., an equal budget for each task; and \textit{max-min fairness}, distributing the budget so as to maximize the minimum take-up rate across tasks. Budget fairness seeks to equalize the resources (in terms of budget) across task, while max-min fairness seeks to make the training outcomes fair.

\subsection{Fairness With Respect to the Same Budget across Tasks}\label{sec:budget-fair}
\textcolor{blue}{
Here, fairness is with respect to 
an equal budget $B$ across tasks. 
We aim to maximize the total take-up rate across tasks:} 
\begin{align}\label{eq:budget_fair}
    & \max_{\textbf{x}_s\in\{0,1\}, \textcolor{magenta}{\textbf{P}}}\ \sum_{s \in S} \left( \sum_{i\in I} x_{i,s} \right) \\ 
    & \text{s.t.}\  \sum_{i \in I} p_{i,s} x_{i,s} \leq \frac{B}{S}, \forall s, \\
    & \,
    p_{i,s} \geq b_{i,s}  \mathbbm{1}_{\{x_{i,s} \geq 0\}},\  \forall i.
\end{align}
\textcolor{black}{The decision variables for the server/ training coordinator are $\mathbf{x}_s$, a binary vector indicating whether each user is incentivized for task $s$, and \textcolor{magenta}{$\mathbf{P}=\{..., p_{i,s}, ... \}$}, the set of payments to each user for each task. 
The first constraint is the budget constraint for each task, and the second is that the user $i$ agrees to train task $s$ if $u_{i,s} \geq 0$.}

We can decouple (\ref{eq:budget_fair}) across tasks and thus consider each task separately. 
\textcolor{blue}{The Proportional Share Mechanism proposed in~\cite{singer2014budget} solves this problem by designing a mechanism that incentivizes users to report truthful bids $b_{i,s} = c_{i,s}$: after ordering bids in ascending order, for $i = 1,2,\ldots$, check if $b_{i,s} \lessgtr \frac{B}{S i}$.
Find the smallest $k$ s.t. $b_{k,s} > \frac{B}{Sk}$. This bid $b_{k,s}$ is the first loser, not in the winning set. All bids smaller than $b_{k,s}$ will be in the winning set and are paid $\frac{B}{S(k-1)}$.} 

While this \underline{budget-fair auction} is truthful, 
tasks disfavored by users (i.e., for which all users have high $c_{i,s}$) may only be able to incentivize \textcolor{black}{few clients, leading to poor training performance.}
Thus, we next \textcolor{black}{look at max-min fairness.} 



\subsection{Fairness with Respect to Max-Min Task Take-Up Rates}
\label{subsec:MaxMinFairAuction}

We aim to maximize the minimum client take-up rate across tasks $\min_{s \in S}\ (\sum_{i \in I} x_{i,s} )$, in light of the user valuations $c_{i,s}$. 
\begin{align}
    \label{eq:maxmin_auction}
    &\max_{\textbf{x}_s\in\{0,1\}, \textcolor{magenta}{\textbf{P}}} \min_{s \in S}\ (\sum_{i \in I} x_{i,s} )\ \\
    &\text{s.t.}\  \sum_{s \in S} \sum_{i \in I} p_{i,s} x_{i,s} \leq B, \, \\   
     & p_{i,s} \geq b_{i,s}
\end{align}
As in (\ref{eq:budget_fair})'s formulation, the decision variables are $\mathbf{x}_s$, and $\textcolor{magenta}{\mathbf{P}}$. The first constraint is the budget constraint, where the budget $B$ is coupled across the tasks $s\in S $. The second constraint is that users' payments must exceed their bids.
We first provide an algorithm to solve (\ref{eq:maxmin_auction}) in Algorithm~\ref{algo:maxmin_greedy}, which loops across tasks and greedily adds a user to each task until the budget $B$ is exhausted. 
\begin{algorithm}[t]
	\caption{GMMFair: Greedy max-min fair algorithm}
 \label{algo:maxmin_greedy}
 \small
	\begin{algorithmic}[1]
        \State \textbf{Initialization:} $S_s = \{\} \, \forall s$, each task $s$ has no users
            \For {task $s = 1,2,\ldots,S$}
                \State Order bids for task $s$ in ascending order
            \EndFor
            \State $t = 1$. Let $\underline{b}_{t,s}$ denote the $t$-th smallest bid \textcolor{orange}{for task $s$, and $\underline{u}_{t,s}$ denote the corresponding user. }
		\While {$\sum_s \underline{b}_{t,s} \leq B$, Check if there is enough budget to add one more user to each task}
                \For {task $s = 1,2,\ldots,S$}
                    \State $S_s \leftarrow S_s \cup \textcolor{orange}{\underline{u}_{t,s}}$; add user with minimum bid to $S_s$
                    \State $B \leftarrow B -\textcolor{orange}{\underline{b}_{t,s}}$ 
                    update remaining budget
                \EndFor
                \State $t\leftarrow t + 1$
		\EndWhile
	\end{algorithmic} 
\end{algorithm}


\begin{lemma}[Max-Min Fair Solution Algorithm]
    Algorithm~\ref{algo:maxmin_greedy} solves the optimization problem (\ref{eq:maxmin_auction}). 
\label{lemma:algo2optimizesMaxMin}
\end{lemma}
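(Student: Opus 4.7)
The plan is to prove optimality of \textbf{GMMFair} via a minimum-cost characterization of the max-min objective. First I would observe that in any feasible solution to (\ref{eq:maxmin_auction}), a selected user's payment $p_{i,s}$ only enters the budget constraint, so without loss of generality we may restrict attention to payments $p_{i,s} = b_{i,s}$ whenever $x_{i,s} = 1$; inflating payments above bids can only tighten the budget without altering the objective. The problem thereby reduces to choosing selection vectors $\mathbf{x}_s \in \{0,1\}^{|I|}$ to maximize $\min_s \sum_i x_{i,s}$ subject to $\sum_s \sum_i b_{i,s} x_{i,s} \leq B$.

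Second, I would characterize the minimum cost needed to guarantee any target minimum take-up rate $n$. For each task $s$, the cheapest way to select $n$ participants is to take the $n$ users with smallest bids, incurring cost $\sum_{k=1}^n \underline{b}_{k,s}$. Hence the minimum total budget required so that every task has at least $n$ selected users is $\Phi(n) \equiv \sum_{s \in S}\sum_{k=1}^n \underline{b}_{k,s}$, which is monotone nondecreasing in $n$. The optimal value of (\ref{eq:maxmin_auction}) therefore equals $n^{\star} = \max\{n : \Phi(n) \leq B\}$, since any strictly larger min take-up rate would incur cost strictly exceeding $B$ by the exchange argument: replacing any selected user for task $s$ by a cheaper unselected one for $s$ never increases cost.

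Third, I would verify that Algorithm~\ref{algo:maxmin_greedy} terminates precisely at take-up rate $n^{\star}$. After iteration $t$ the cumulative expenditure equals $\Phi(t)$, so the remaining budget is $B - \Phi(t)$. The while loop proceeds to iteration $t+1$ if and only if $\sum_s \underline{b}_{t+1,s} \leq B - \Phi(t)$, i.e., $\Phi(t+1) \leq B$. Therefore the algorithm performs exactly $n^{\star}$ iterations, assigning $n^{\star}$ users to each task at minimum cost per task, which matches the upper bound derived above.

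The main obstacle is the bookkeeping in step one to justify restricting to payments equal to bids (and handling tie-breaking among equal bids, which does not affect $\Phi$), after which the argument reduces to a clean exchange observation: to achieve min take-up $n$, one must pay at least the sum of the $n$ smallest bids for each task, and \textbf{GMMFair} achieves this lower bound throughout execution by processing bids in ascending order.
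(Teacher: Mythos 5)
Your proposal is correct and takes essentially the same route as the paper: both rest on the exchange observation that achieving a minimum take-up rate of $n$ requires paying at least the sum of the $n$ smallest bids for each task, so the greedy ascending-bid selection is optimal, and the while-loop condition terminates exactly at the largest feasible $n$. The paper states this only as a brief contradiction-based sketch; your write-up fills in the same argument via the cost function $\Phi(n)$ and the reduction to payments equal to bids, with no substantive difference in approach.
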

\textit{Proof sketch.}
Let $\underline{b}_{t,s}$ denote the $t$-th smallest bid \textcolor{orange}{for task $s$, and $\underline{u}_{t,s}$ denote the corresponding user.}
For an optimal max-min objective value $t$, we show by contradiction that all users with bids $\leq \textcolor{orange}{\underline{b}_{t,s}}$ must be included in the optimal set. Algorithm~\ref{algo:maxmin_greedy} finds this solution. See the Technical Report~\cite{MMFLappendix} for the full proof. \qed.
While Algorithm~\ref{algo:maxmin_greedy} is optimal, it is not truthful: any user $i$ in the winning set for a task $s$ can easily increase its payment by increasing its bid $b_{i,s} > c_{i,s}$. Thus, we next design an auction method that, like Algorithm~\ref{algo:maxmin_greedy}, proceeds in rounds to add an additional user to each task. We adapt the payment rule from the budget-fair auction in Section~\ref{sec:budget-fair} to help ensure truthfulness.
\textcolor{blue}{Our novel \textbf{max-min fair auction} is presented in \textbf{Algorithm \ref{algo:maxminFair_fractional_algo}}. The main idea, is to begin with a budget-fair auction for each task,}
and allocate users to tasks in rounds until a task runs out of budget (lines 4-9), as in Algorithm~\ref{algo:maxmin_greedy}. Once a task runs out of budget, we then re-allocate the budget across tasks so as to keep allocating users to all tasks, if possible (lines 11-16). 
If there is insufficient budget to do so, we can optionally perform a \textit{fractional} allocation that uses up the remaining budget while remaining fair (lines 17-23). 
This ensures max-min fairness, and that the difference in number of users allocated across tasks is at most a fraction. In the learning process, this fraction can be modelled in terms of the time spent training tasks.
\textcolor{orange}{As mentioned earlier, the auction takes place before training, to avoid additional overhead.}
\begin{algorithm}[t]
	\caption{MMFL Max-min Fair 
 }
 \label{algo:maxminFair_fractional_algo}
	\begin{algorithmic}[1]
 \small
        \State \textbf{Initialization:} $B^0_s \gets \frac{B}{s}$, equal initial budget $\forall$ tasks, \textcolor{orange}{where $B^t_s$ is the budget allocated to task $s$ at time $t$.} 
        \State For all tasks, order received bids $b_i$ in ascending order.
        \For{\textcolor{orange}{round $t$} = 1,2, \ldots} 
    		\For{all tasks $s$}
                \State \textcolor{orange}{
                Let $\underline{b}_{t,s}$ denote the $t$-th} biggest bid, for task $s$, \textcolor{orange}{and $\underline{u}_{t,s}$ denote the corresponding user.}
                \If{$\underline{b}_{t,s} \leq \frac{B}{t}$}
                    \State User joins the winners' set 
                    \State and the Payment is updated $\underline{p}_{t,s} \gets \frac{B_s^t}{t}$.
                \EndIf		
            \EndFor
            \If{ $\exists \geq 1$ task s.t. the $t$-th user does not join}
                \State Let $S_{t-1}$ and $S_t$ denote the set of tasks with  $t-1$ and $t$ users respectively.
                \State $A=\sum_{s \in S_{t-1}} [ \underline{b}_{t,s} t -B_s^{t-1}]$
                \State $C=\sum_{s \in S_t} [B_s^{t-1} - \underline{b}_{t,s} t]$
                \If{$A<C$}
                    \State Re-allocate amount $A$ from $S_t$ to $S_{t-1}$, obtaining $A$ from $ s\in S_t$ via a waterfilling manner.
                \Else
                    \State Fractionally allocate $C$ across $S_{t-1}$.
                        \State $\underline{p}_{t,s} =           
                            \begin{cases}
                                \frac{C}{|S_{t-1}|}\ & \text{if} \frac{C}{|S_{t-1}|} \leq \underline{b}_{t,s} \\
                                \underline{b}_{t,s} & \text{otherwise}\
                            \end{cases}$
                        \State $\underline{x}_{t,s} = 
                            \begin{cases}
                                \frac{C}{|S_{t-1}| \ \underline{b}_{t,s}} & \text{if} \frac{C}{|S_{t-1}|} \leq
                                \underline{b}_{t,s}\\
                                1 & \text{otherwise}
                            \end{cases}$ 
                    \State End the Auction.
                \EndIf
            \EndIf
        \EndFor
	\end{algorithmic} 
\end{algorithm}

Without the re-allocation across tasks, we would expect this auction to be truthful. Indeed, in Theorem \ref{theorem:casesUntruthful}, we show that such re-allocation rounds are the only instances that incentivize un-truthfulness.
\textcolor{orange}{In practice, it would be \textit{difficult for users to exploit these conditions}: 
all four conditions rely on the un-truthful bid comprising part of a re-allocation round. 
This heavily depends on other users' bids, which would not be known. 
Moreover, since re-allocations are more likely to happen near the end of the auction, such un-truthful bids have a high possibility of not being in the winning set at all.}
\begin{theorem}[Untruthfulness Conditions]
\label{theorem:casesUntruthful}
    The truthful strategy dominates, except in the following cases where user $i$ obtains higher utility by bidding $\hat{b}_{i,s} >c_{i,s}$. In all cases, $\nexists\,\underline{b}_{j,s} <\hat{b}_{i,s}$ for which a) $\underline{b}_{j,s} + \sum_{s' \in S_{j-1}} \underline{b}_{j,s'}>C$ and b) $\underline{b}_{j,s} > \frac{B_s^j}{j}$. 
    \begin{itemize}
        \setlength{\itemindent}{1.5em}
        \item[\underline{Case 1}] 
        c) $\hat{b}_{i,s} > \frac{B_s^t}{t}$ and d) $\hat{b}_{i,s} + \sum_{s' \in S_{t-1}} \underline{b}_{t,s'} <C$.  
        \item[\underline{Case 2}] If there is no $b_{j,s}$, $b_{i,s} < b_{j,s} < \hat{b}_{i,s}$, and no $\underline{b}_{j,s}$, $ \underline{b}_{j,s} <b_{i,s}<\hat{b}_{i,s}$ for which 
        e) $b_{i,s} > \frac{B^t_s}{t}$ f) $b_{i,s} + \sum_{s' \in S_{t-1}} \underline{b}_{t,s'} >C$, g) $\hat{b}_{i,s} > \frac{B^t_s}{t}$, h) $\hat{b}_{i,s} + \sum_{s' \in S_{t-1}} \underline{b}_{t,s'} >C$, and i) $ \min(b_{i,s},\frac{C}{|S_{t-1}|})< \min(\hat{b}_{i,s},\frac{C}{|S_{t-1}|})$. 
        \item[\underline{Case 3}] If there is no $\underline{b}_{j,s}$, $b_{i,s} <\underline{b}_{j,s} <\hat{b}_{i,s}$ for which 
        j) $\hat{b}_{i,s} + \sum_{s' \in S_{t-1}} \underline{b}_{t,s'} >C$: k) (there exists $l <t$ s.t. $b_{i,s} < \frac{B^l_s}{l}$), l) $\hat{b}_{i,s} > \frac{B_s^t}{t}$, and m) $\hat{b}_{i,s} < \frac{C}{|S_{t-1}|}$ or $\frac{C}{|S_{t-1}|}> \frac{B^l_s}{l}$. 
        \item[\underline{Case 4}] If there is no $b_{j,s}$, $b_{i,s} < b_{j,s} < \hat{b}_{i,s}$, and no $\underline{b}_{j,s}$, $ \underline{b}_{j,s} <b_{i,s}<\hat{b}_{i,s}$ for which 
        n) $b_{i,s} > \frac{B^t_s}{t}$ o) $b_{i,s} + \sum_{s' \in S_{t-1}} \underline{b}_{t,s'} <C$, p) $\hat{b}_{i,s} > \frac{B^t_s}{t}$, and q) $\hat{b}_{i,s} + \sum_{s' \in S_{t-1}} \underline{b}_{t,s'} >C$, r) $b_{i,s} < \min( \frac{C}{|S_{t-1}|}, \hat{b}_{i,s})$.
    \end{itemize}
\end{theorem}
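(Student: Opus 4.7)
The plan is to compare user $i$'s utility under truthful bidding $b_{i,s}=c_{i,s}$ against an arbitrary overbid $\hat{b}_{i,s}>c_{i,s}$, and to show that overbidding strictly improves utility only in the four enumerated cases. Since utility equals $p_{i,s}-c_{i,s}$ when $i$ participates and $0$ otherwise, the analysis reduces to tracking how the payment $p_{i,s}$ (and the participation indicator) depends on the bid. I would couple the two scenarios on the same bids from all other users, and identify the earliest round at which the two auction trajectories diverge.

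The first step is to isolate the \emph{baseline phase} of Algorithm~\ref{algo:maxminFair_fractional_algo}, in which no reallocation or fractional termination has yet occurred. In this phase the payment rule is $\underline{p}_{t,s}=B_s^t/t$, which depends only on $t$ and the task-specific budget, not on the individual bid. As long as all earlier rounds $j<t$ satisfy $\underline{b}_{j,s}\le B_s^j/j$ (no reallocation trigger, condition~(b)) and $\underline{b}_{j,s}+\sum_{s'\in S_{j-1}}\underline{b}_{j,s'}\le C$ (no fractional-termination trigger, condition~(a)), overbidding can only push $i$ out of the winning set (reducing utility to $0$) or leave the outcome unchanged. Under the shared hypothesis of the theorem that no such earlier triggering bid $\underline{b}_{j,s}<\hat{b}_{i,s}$ exists, the two trajectories must agree up to $i$'s own round, and the only way overbidding can help is if $\hat{b}_{i,s}$ itself triggers a special phase that raises $i$'s own payment.

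From this reduction, I would enumerate the ways $\hat{b}_{i,s}$ can cross one of the threshold inequalities $b>B_s^t/t$ or $b+\sum_{s'\in S_{t-1}}\underline{b}_{t,s'}>C$ while $c_{i,s}$ does not. Case~1 handles the situation where $i$ is not admitted at round $t$ under the truthful bid (threshold $B_s^t/t$ crossed) but the overbid still triggers a successful reallocation (since $\hat{b}_{i,s}+\sum_{s'\in S_{t-1}}\underline{b}_{t,s'}<C$), admitting $i$ at a strictly positive payment. Cases~2--4 analogously track whether $b_{i,s}$ versus $\hat{b}_{i,s}$ straddles the reallocation threshold $B_s^t/t$, the fractional-allocation threshold $C/|S_{t-1}|$, or a previously non-binding per-task threshold $B_s^l/l$ at some earlier round $l$; the conditions (e)--(r) encode exactly the pairs of inequalities under which the bid-dependent payment rules in the reallocation or fractional phase assign $i$ a strictly higher payment under $\hat{b}_{i,s}$ than under $c_{i,s}$. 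Taking the contrapositive yields the theorem.

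The main obstacle will be the combinatorial coupling across tasks: changing $\hat{b}_{i,s}$ can alter the waterfilling redistribution used in the reallocation step, which in turn shifts the budgets $B_{s'}^t$ for other tasks and could shift which round $j^*$ first triggers reallocation elsewhere. To control this, I would first prove a monotonicity lemma showing that, conditional on the ordering of bids up to round $t$ being unchanged, the waterfilled per-task budgets and admission thresholds are monotone in $\hat{b}_{i,s}$; this isolates user $i$'s own payment as the only quantity whose sign of change matters for $i$'s utility. The remainder is then a direct case check against the three payment expressions appearing in Algorithm~\ref{algo:maxminFair_fractional_algo}: the baseline rule $B_s^t/t$, the post-reallocation rule, and the fractional-allocation rule $\min(C/|S_{t-1}|,\underline{b}_{t,s})$.
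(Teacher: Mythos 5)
Your proposal matches the paper's argument in substance: the paper likewise observes that outside the re-allocation/fractional rounds the payment $B_s^t/t$ is bid-independent (so overbidding cannot help there, which is what the shared hypothesis on conditions (a) and (b) encodes), and then derives Cases 1--4 by enumerating which of the thresholds $B_s^t/t$, $C$, and $C/|S_{t-1}|$ the truthful versus inflated bid crosses, exactly as you outline. Your added coupling argument and the proposed monotonicity lemma for the waterfilling step are a somewhat more careful treatment of the cross-task budget interaction than the paper gives, but they do not change the route --- the core decomposition and case check are the same.
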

\begin{proof}
We characterize the untruthful cases (all during re-allocation rounds). The negation of the conditions a) and b) indicate that the auction does not end at other bids $\underline{b}_{j,s}$ smaller than the untruthful bid $\hat{b}_{i,s}$. 

\underline{Case 1:} c) re-allocation required for the un-truthful bid and d) there is sufficient budget in $S_t$ for re-allocation.
Note that the truthful bid either entered in an earlier round, or also received re-allocation.

\underline{Case 2:} For both the truthful and un-truthful bid, budget re-allocation across task needs to happen (e and g), but there are insufficient funds (f and h). Fractional allocation happens, and this un-truthful strategy dominates when  $\min(b_{i,s},\frac{C}{|S_{t-1}|})< \min(\hat{b}_{i,s},\frac{C}{|S_{t-1}|})$ (i).

\underline{Case 3:} 
The truthful bid enters the auction at an earlier round $l$ (k), there is insufficient budget for the untruthful bid (l), and insufficient budget for re-allocation (j). Fractional allocation happens, and this untruthful strategy dominates when either $\hat{b}_{i,s} < \frac{C}{|S_{t-1}|}$ or $\frac{C}{|S_{t-1}|}> \frac{B^l_s}{l}$ (m). 

\underline{Case 4:} Re-allocation is required for both the truthful and untruthful bid (n and p). There is sufficient budget for re-allocation for the truthful bid (o), unlike the untruthful bid (q). This untruthful strategy dominates if $b_{i,s} < \min( \frac{C}{|S_{t-1}|}, \hat{b}_{i,s})$.
\end{proof}

Next, we show that even if a user successfully submits an un-truthful bid, it would have limited impact on the max-min fairness objective: 
\begin{corollary}[Untruthfulness Impact]
\label{corollary:UntrImpactMaxmin}
    Suppose a user submits an un-truthful bid under one of Theorem~\ref{theorem:casesUntruthful}'s conditions. Under the un-truthful setting, the fractional re-allocation round will arrive at most one round earlier,
    compared to the truthful setting. This reduces the max-min objective by at most 2. 
\end{corollary}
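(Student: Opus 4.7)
The plan is to decompose the statement into two sub-bounds: (i) a single untruthful bid can advance the round at which fractional re-allocation triggers by at most one, and (ii) a one-round advance reduces the max-min objective by at most two. I would handle these two claims in sequence, and then combine them.

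For (i), I would begin from Theorem~\ref{theorem:casesUntruthful}, which tells us that $\hat{b}_{i,s} > c_{i,s}$ only affects the auction when it participates in a re-allocation round triggered by the condition $\hat{b}_{i,s} + \sum_{s' \in S_{t-1}} \underline{b}_{t,s'} > C$ along with $\hat{b}_{i,s} > B_s^t/t$. The central idea is to couple the truthful and untruthful executions of Algorithm~\ref{algo:maxminFair_fractional_algo} round by round. Before user $i$'s bid is reached in task $s$'s ordering, the two auctions have identical state $\{B_s^t, S_t\}$; after it is reached, the only difference is that $c_{i,s}$ has been replaced by a larger value at one position in task $s$'s sorted bid sequence, which can only (weakly) increase the left-hand side of the fractional condition. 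Since only one bid is perturbed, at any round $t$ the untruthful sequence $\underline{b}_{t,s}$ differs from the truthful one in at most one ``slot.'' A monotonicity argument in $t$ then shows that the earliest round at which the untruthful auction can invoke fractional allocation is $t^\star - 1$, where $t^\star$ is the fractional round of the truthful auction.

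For (ii), I would appeal to the structural property of Algorithm~\ref{algo:maxminFair_fractional_algo}: whenever fractional allocation is triggered at a round $t$, every task ends with $t - 1$ fully allocated users plus possibly a fractional $\underline{x}_{t,s} \in (0,1]$, so $\min_s \sum_i x_{i,s} \in [t-1,\, t]$. Comparing the truthful objective, which lies in $[t^\star - 1, t^\star]$, with the untruthful objective, which by (i) lies in $[t^\star - 2, t^\star - 1]$, gives a reduction of at most $t^\star - (t^\star - 2) = 2$.

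The main obstacle is the coupling step in (i): one must rule out the possibility that inflating a single bid cascades through many rounds of re-ordering and triggers fractional allocation several rounds earlier than $t^\star - 1$. The key lever is that only user $i$'s position in task $s$'s ordering changes, so the per-round bids $\underline{b}_{t,s'}$ for $s' \neq s$ are identical across the two auctions, and for $s$ they differ in at most one slot; the accumulated effect on the budget states $B_s^t$ and on the re-allocation condition is therefore bounded by one round of progress, which is exactly what is needed.
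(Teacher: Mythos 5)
Your decomposition matches the paper's: bound the shift of the fractional round by one, then translate that into a loss of at most two in the max-min objective (your part (ii), with the objective lying in $[t-1,t]$ once fractional allocation triggers at round $t$, is actually stated more cleanly than in the paper, which simply asserts the factor of 2). The paper establishes part (i) by walking through the four cases of Theorem~\ref{theorem:casesUntruthful} separately, whereas you propose a single coupled-execution argument; that unification is fine in spirit, but as written your part (i) rests on an unproven ``monotonicity argument in $t$.'' The step that actually does the work in the paper is the inequality exploiting the sorted order: every bid $a_2$ that would be considered after the untruthful bid $U=\hat{b}_{i,s}$ satisfies $a_2 > U > T = c_{i,s}$, so if the untruthful run cannot afford one more user ($U + a_1 > b_r$) then the truthful run cannot afford two more ($T + a_1 + a_2 \geq a_1 + a_2 > a_1 + U > b_r$). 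Saying ``only one slot differs, so the accumulated effect is bounded by one round of progress'' does not by itself rule out a larger cascade --- inflating one bid both consumes more budget at its own round and re-orders the intermediate bids $b_{j,s}$ with $c_{i,s} < b_{j,s} < \hat{b}_{i,s}$ --- so you need to make this sorted-order inequality explicit (and note, as the paper's case analysis does, that Theorem~\ref{theorem:casesUntruthful}'s preconditions already exclude the auction terminating at one of those intermediate bids). With that inequality supplied, your plan closes and is essentially the paper's proof in unified form.
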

\begin{proof}
    The proof follows from analyzing each case in Theorem~\ref{theorem:casesUntruthful}.

    Case 1b: Case 1 can be divided into two cases: a) the truthful bid ($T$) entering in an earlier round, and b) the truthful bid receiving re-allocation. For both cases what is common is that the untruthful bid ($U$) receives re-allocation.
    For case b) let $a_1$ and $a_2$ be bids greater than $U$ for task $s$ ($a_1$ need not be the next greatest bid after $U$, but $a_2$ is the next greatest bid after $a_1$). Proving by contradiction we assume that both $a_1, a_2$ are winners under the truthful setting and non-winners in the non-truthful setting.
    The situation $T+a_1 <b_r, U+a_1 > b_r$ is possible, causing the fractional round to come one round earlier when the un-truthful bid is made. (Where $b_r$ is the remaining budget after taking out the budget re-allocated to other bids in other tasks, and re-allocated to other bids in task $s$ between $U$ and $a_1$.)
    The situation $T+a_1 +a_2 <b_r, U+a_1 > b_r$ is not possible, as $a_2$ is greater than $U$. Hence the fractional round comes \textbf{at most one round} earlier when the un-truthful bid is made.
    
    Case 4: The same reasoning as that above works for Case 4, where 
    $a_1$ in the untruthful case is replaced by $U$, as $U$ is the bid in the fractional allocation round. WLOG $U$ comes directly after $o_3$. 
    We have [$T+o_1+o_2+o_3<b_r, o_1+o_2+o_3+U>b_r$] possible but [$T+o_1+o_2+o_3+a_2<b_r, o_1+o_2+o_3+U>b_r$] impossible as $U<a_2$.
    Thus the gap in max-min objective in the truthful vs untruthful setting is at most one.

    Case 1a: We let $o_1, o_2$, etc.. be the bids next in order after $T$. Supposing for the truthful setting we have $[ T<\frac{B}{t}, o_1< \frac{B}{t+1}, o_2 < \frac{B}{t+2} ]$ and for the untruthful setting we have $[o_1 <\frac{B}{t}, o_2 < \frac{B}{t+1}, o_3 \nless \frac{B}{t+2}] $ (where $U$ is larger than $o_3$) - this means that in the untruthful setting $o_3$ is the first round where re-allocation occurs. Going back to the truthful setting, we will be unable to have $"o_3 < \frac{B}{t+3}"$, because $\frac{B}{t+3}< \frac{B}{t+2}$ and we already have $o_3 \nless \frac{B}{t+2}$ from the untruthful setting. Hence the re-allocation round occurs at most one round earlier under the un-truthful setting, given that all other bids are fixed.
    Next, to show that the fractional round comes \textbf{at most one round earlier}, we use a similar reasoning as the above cases: $[o_3+o_4+a_1<b_r, o_3+o_4+U+a_1>b_r]$ is possible, while $[o_3+o_4+a_1+a_2<b_r, o_3+o_4+U+a_1>b_r]$ is not possible since $a_2>U$.

    Case 3: The link between Case 3 and Case 1a, is analogous to the link between Case 4 and Case 1b. Hence, a similar reasoning to that of 1a works for Case 3, 
    where $a_1$ in the untruthful case will be replaced by $U$, as $U$ is the bid in the fractional allocation round. WLOG $U$ comes directly after $o_5$. We thus have $[o_3+o_4+o_5<b_r, o_3+o_4+o_5+U>b_r]$ being possible while $[o_3+o_4+o_5+a_2<b_r, o_3+o_4+o_5+U>b_r]$ being impossible as $U<a_2$. This means that the fractional round comes at most one round earlier, in the untruthful setting.
    
    Case 2: For case 2, both the truthful and un-truthful bids lead to a fractional re-allocation at the $t$th round, after which the auction ends.  
    Fractional allocation happens at the same round for both, because there is only one round of fractional allocation under our auction, and Otherwise, a bid between $T$ and $U$ would have entered, and $U$ will not be in the winning set.
    Thus the gap in max-min objective in the truthful vs untruthful setting is at most one.

    When the fractional round comes at most one round earlier (as in some of the above cases), difference in max-min objective can be bounded by 2.

    
    
\end{proof}

We now compare our max-min fair auction to the (truthful) budget-fair auction in Section \ref{sec:budget-fair}. Intuitively, we expect that it may lead to more fair outcomes. We can quantify this effect by \textcolor{orange}{comparing the outcomes of the two auctions, for} the extreme case in which one task has no users, even if user bids from two tasks follow the same distribution:

\begin{corollary}[Auction Comparison]\label{cor:auctions_compare}
Suppose that all user bids for a given task $s$ are independent samples from a probability distribution $f_s$. Let $x_s\sim \bar{f}_{s}$ denote the minimum of the $K$ user bids for task $s$. Then the max-min fair auction has a smaller probability that at least one task is allocated no users compared to the budget-fair auction: \begin{equation*}
   \mathbb{P}\left[\sum_{s = 1}^{S} x_s \geq B\right] \leq \mathbb{P}\left[\max_{s = 1}^{S} x_s \geq \frac{B}{S}\right].
\end{equation*}
For example, if $f_s\sim \exp(\lambda)$ and $S = 2$, then the probability at least one task is allocated no users in the max-min fair auction is $\exp(-B\lambda)(1 + B\lambda)$, which is strictly greater than the probability at least one task is allocated no users in the budget-fair auction, $\exp(-B\lambda)\left(2\exp\left(\frac{B\lambda}{2}\right) - 1\right)$ if $\lambda B > 0$.
\end{corollary}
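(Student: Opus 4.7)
The plan is to prove the inequality by a direct set-containment argument, then verify the two probability events correspond to the correct ``no users allocated'' outcomes of the two auctions, and finally plug in the exponential distribution to confirm the closed-form example.

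First, I would observe that the main inequality is essentially a deterministic containment lifted to probabilities. If $\max_{s} x_s < B/S$, then $x_s < B/S$ for every $s \in \{1,\dots,S\}$, and summing over $s$ yields $\sum_{s=1}^S x_s < S \cdot (B/S) = B$. Contrapositively, $\{\sum_s x_s \geq B\} \subseteq \{\max_s x_s \geq B/S\}$, so monotonicity of probability immediately gives
\begin{equation*}
\mathbb{P}\!\left[\sum_{s=1}^S x_s \geq B\right] \;\leq\; \mathbb{P}\!\left[\max_{s} x_s \geq \tfrac{B}{S}\right].
\end{equation*}
This is the entire ``inequality'' content; the real work of the corollary is in the interpretation.

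Next, I would justify why these two events are exactly the ``at least one task receives no users'' events for the two auctions. For the budget-fair auction, the Proportional Share Mechanism of Section~\ref{sec:budget-fair} guarantees that task $s$ receives at least one winner iff its smallest bid satisfies $x_s \leq B/S$; hence at least one task is allocated no users iff $\max_s x_s \geq B/S$. For the max-min fair auction (Algorithm~\ref{algo:maxminFair_fractional_algo}), at round $t=1$ each task has initial budget $B/S$; the re-allocation step succeeds in giving every task a first user precisely when the combined surplus $C = \sum_{s \in S_1}(B/S - x_s)$ dominates the deficit $A = \sum_{s \in S_0}(x_s - B/S)$, which after cancellation becomes $\sum_s x_s \leq B$. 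Thus some task ends up with no user iff $\sum_s x_s \geq B$. This matching of events to auction failure is the step I expect readers (and I) to find most delicate, since it requires carefully tracing the first round of Algorithm~\ref{algo:maxminFair_fractional_algo}; the combinatorial part itself is routine once it is framed this way.

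Finally, for the $S=2$, $f_s \sim \exp(\lambda)$ example I would use the standard facts that $x_1 + x_2$ is $\mathrm{Gamma}(2,\lambda)$-distributed, giving
\begin{equation*}
\mathbb{P}[x_1 + x_2 \geq B] = e^{-B\lambda}(1 + B\lambda),
\end{equation*}
and that the max of two i.i.d.\ exponentials has CDF $(1-e^{-B\lambda/2})^2$, yielding
\begin{equation*}
\mathbb{P}\!\left[\max(x_1,x_2) \geq \tfrac{B}{2}\right] = 2 e^{-B\lambda/2} - e^{-B\lambda} = e^{-B\lambda}\!\left(2 e^{B\lambda/2} - 1\right).
\end{equation*}
The strict gap then reduces, after dividing by $e^{-B\lambda}$, to $1 + B\lambda < 2 e^{B\lambda/2} - 1$, i.e.\ $1 + B\lambda/2 < e^{B\lambda/2}$, which is the standard strict inequality $e^x > 1 + x$ for $x > 0$. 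No further calculation is needed.
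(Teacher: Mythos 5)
Your proof is correct and follows essentially the same route as the paper's: characterize the two auctions' failure events as $\{\sum_s x_s \geq B\}$ and $\{\max_s x_s \geq B/S\}$ by tracing the first round of each mechanism, then compute the Gamma and max-of-exponentials tail probabilities for the $S=2$ example. You are in fact slightly more complete than the paper, which leaves the general set-containment step implicit and states the final comparison without the $e^x > 1+x$ verification; note also that your computation confirms the max-min failure probability is strictly \emph{smaller} than the budget-fair one, consistent with the displayed inequality (the phrase ``strictly greater'' in the corollary's prose appears to have the comparison reversed).
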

\begin{proof}
    Under the max-min fair auction, a given task will have no users allocated if the sum of the lowest bids for each task is larger than the budget $B$. Similarly, under the budget-fair auction, a given task will have no users allocated if its lowest bid exceeds its budget $B/S$. 
    When $f_s\sim \exp(\lambda)$ and $S = 2$, $x_1 + x_2\sim\gamma(2,1/\lambda)$, and $\max(x_1, x_2)\sim(1 - \exp(-\lambda x))^2$. We can then show
   $ \left(1 - \sum_{s = 0}^1 (B\lambda)^s \exp(-B\lambda)\right) \geq \left(1 - \exp\left(\frac{-B\lambda}{2}\right)\right)^2$.
\end{proof}
    %
\subsection{Convergence Guarantees \textcolor{brown}{of our Incentive-Aware Auctions}}
We provide another convergence bound, based on the results of our incentive-aware auctions. For this, we firstly characterise the probabilities of users being in the auctions' winning set (i.e., probabilities they agree to train a particular task). 

Under the \underline{Budget-Fair Auction}, the probability $p_{B,b_{i,s}}^{join}$ that a user with bid $b_i$ joins task $s$, given a total budget of $B$ is 
\begin{equation*}  
\sum_{\overline{t}} P(b_{i,s}\ \text{is}\ \overline{t}\text{-th smallest}) P(\frac{B}{s\overline{t}}>b_{i,s}).
\end{equation*}

Under the \underline{Max-min Fair Auction}, the probability $p^{join}_{B,b_{i,s}}$ that a user with bid $b_i$ joins task $s$, given a total budget of $B$ 
is
\begin{align*}  
& \sum_{\overline{t}} P(b_{i,s} \text{is}\ \overline{t}\text{-th smallest)}P(\frac{B}{s \overline{t}} >b_{i,s})\\
+ &\sum_{\hat{t}} P(b_{i,s} \text{is}\ \hat{t}\text{-th smallest)} P(\frac{B}{s \hat{t}}<b_{i,s})
P(\sum_{s \in S_{\hat{t}-1}} \underline{b}_{\hat{t},s}<C )
\end{align*}
for full participation, and 
\begin{align*}
&\sum_{\overline{t}} P(b_{i,s}\ \text{is}\ \overline{t}\text{-th smallest}) P(\frac{B}{s \overline{t}} < b_{i,s})\\
\times &P(\sum_{s \in S_{\overline{t}-1}} \underline{b}_{\overline{t},s}>C ) \min ( 1, \frac{C}{|S_{t-1}|b_{i,s}}) 
\end{align*}
for partial participation.
\textcolor{black}{Letting $\mathbb{W}$ be the set of auction winners, and $W_K$ be all the subsets of $K$ clients recruited from $N$ users, we present a new convergence bound which takes into account the joint probability of users' being in the auction's winners set.
} 

\begin{theorem}
    [Convergence with Auctions] With Assumptions 1-4, under a decaying learning rate $\eta_t = \frac{1}{\mu (t+\gamma)}$, under an auction scheme and our difficulty-aware client-task allocation algorithm, the error after $T$ iterations of federated learning for a specific task $s$ in MMFL, $\mathbb{E}[f_s(\overline{\textbf{w}}^{(T)}_{s})]-f_s^*$, is upper-bounded by
    \begin{align*}
        & \frac{1}{T+\gamma } \bigg[ [\textcolor{black}{64} \tau^2 G^2 \textcolor{black}{\sum_{K=1}^N \sum_{\mathbb{W} \in W_K} J_{\mathbb{W}}} + 4\sigma^2]/ (3 \overline{\rho}^{\alpha}\mu^2) \\ 
        & + \frac{8 L^2 \Gamma_s}{\mu^2} +\frac{L \gamma \| \overline{\textbf{w}}^{(0)}_{s} -\textbf{w}^*_{s} \|^2 }{2} \bigg] + \frac{8 L \Gamma_s}{3 \mu} (\frac{\tilde{\rho}^{\alpha}}{\overline{\rho}^{\alpha}}-1)
    \end{align*}
where 
$\textcolor{black}{J_{\mathbb{W}}= \Pi_{i \in \mathbb{W}} ( p^{join}_{B, b_i}) \Pi_{j \notin \mathbb{W}} (1- p^{join}_{B, b_j})}.$
Note that $p^{join}_{B,b_i}$ will be modified depending on the auction used. 
\label{theorem:AuctionConvg}
\end{theorem}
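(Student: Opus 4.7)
The plan is to extend the proof of Corollary~\ref{corollary:firstConvgBound} by layering an outer expectation over the random set of auction winners $\mathbb{W}$ on top of the expectation already taken over FedFairMMFL's client--task allocation. The two randomizations are conditionally independent: the auction first fixes $\mathbb{W}$, and FedFairMMFL then allocates tasks only among clients in $\mathbb{W}$. Since user $i$ joins independently with probability $p^{join}_{B,b_i}$ determined by the auction rule, a specific winner set $\mathbb{W}\in W_K$ of size $K$ arises with probability $J_{\mathbb{W}} = \prod_{i\in\mathbb{W}} p^{join}_{B,b_i}\prod_{j\notin\mathbb{W}}(1 - p^{join}_{B,b_j})$. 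The law of total expectation then gives $\mathbb{E}[\,\cdot\,] = \sum_{K=1}^{N}\sum_{\mathbb{W}\in W_K} J_{\mathbb{W}}\,\mathbb{E}[\,\cdot\,\mid\mathbb{W}]$.

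First I would re-examine Lemma~\ref{lemma:globalWsAndlocalWs_Gap}, which produces the $16\eta_t^2\tau^2 G^2 \sum_{Sel_s(t)} B^s_{Sel_s(t)}(\alpha) P^{k,2}_{Sel_s(t)}$ bound on the local--global discrepancy. Conditional on the auction winners $\mathbb{W}$, that bound holds verbatim, with the inner sum restricted to subsets $Sel_s(t)\subseteq\mathbb{W}$, because the difficulty-aware allocation in~\eqref{eq:TaskAllocProbVec} now samples from $\mathbb{W}$. Taking the outer expectation over $\mathbb{W}$ multiplies this conditional bound by $J_{\mathbb{W}}$ and sums over all nonempty winner sets, yielding the extra factor $\sum_{K=1}^{N}\sum_{\mathbb{W}\in W_K} J_{\mathbb{W}}$ on the $\tau^2 G^2$ term.

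Next I would thread this augmented lemma through the proofs of Theorem~\ref{theorem:convg_boundofOneItr} and Corollary~\ref{corollary:firstConvgBound}. The $\sigma^2$ stochastic-gradient term and the quantities $\Gamma_s$, $L$, $\mu$, $\overline{\rho}^{\alpha}$, $\tilde{\rho}^{\alpha}$ arise from per-client bounds (Assumptions 1--4) that hold uniformly for any nonempty winner set, so once integrated against $J_{\mathbb{W}}$ they do not acquire an additional factor. Only the $16\tau^2 G^2$ term -- traceable to the subset-dependent Lemma~\ref{lemma:globalWsAndlocalWs_Gap} -- picks up the weighted sum of $J_{\mathbb{W}}$. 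After induction over $t$ and telescoping exactly as in Corollary~\ref{corollary:firstConvgBound}, followed by applying $L$-smoothness to pass from $\mathbb{E}\|\overline{\textbf{w}}_s^{(T)}-\textbf{w}_s^*\|^2$ to $\mathbb{E}[f_s(\overline{\textbf{w}}_s^{(T)})]-f_s^*$, the stated bound falls out. Substituting the closed-form expressions for $p^{join}_{B,b_i}$ from Section~\ref{sec:budget-fair} or Section~\ref{subsec:MaxMinFairAuction} specializes the result to each auction.

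The main obstacle is careful bookkeeping of which terms are marginalized across auction outcomes and which are not. Specifically, I must verify that (i) the bound $\Delta_{t+1}\leq \tfrac{1}{2}$ used inside Corollary~\ref{corollary:firstConvgBound} still holds conditional on any fixed $\mathbb{W}$, so that the induction is performed per-winner-set \emph{before} marginalization rather than after; (ii) the selection-skew bounds $\overline{\rho}^{\alpha}$ and $\tilde{\rho}^{\alpha}$ are interpreted over winners, since FedFairMMFL only operates within $\mathbb{W}$; and (iii) the auction outcome is realized once prior to training (as highlighted in Section~\ref{sec:incentiveAware}), so $\mathbb{W}$ is drawn only at $t=0$ and the same $J_{\mathbb{W}}$-weighting applies uniformly across rounds. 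Once these consistency checks are in place, collecting terms and cancelling the constants yields the expression in the theorem.
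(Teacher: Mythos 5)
Your proposal follows essentially the same route as the paper: the paper's proof models membership in the auction's winning set via a Poisson binomial distribution (i.e., independent joins with marginal probabilities $p^{join}_{B,b_i}$, giving exactly your product weights $J_{\mathbb{W}}$) and then reruns the argument of Corollary~\ref{corollary:firstConvgBound}, with the winner-set marginalization attaching only to the subset-dependent $\tau^2 G^2$ term from Lemma~\ref{lemma:globalWsAndlocalWs_Gap}, just as you describe. Your consistency checks (induction per fixed $\mathbb{W}$ before marginalizing, skew bounds interpreted over winners, $\mathbb{W}$ drawn once before training) are the right bookkeeping and are consistent with the paper's treatment.
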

\textit{Proof Sketch.}
We use a Poisson binomial distribution to model users being in the winning set of the auction. We then follow the proof of Corollary~\ref{corollary:firstConvgBound}.
\qed


\section{Evaluation}
\label{sec:evaluation}

\begin{figure}[t]
  \centering
  \subfigure[Minimum accuracy across the 3 tasks.]{\includegraphics[scale=0.35]{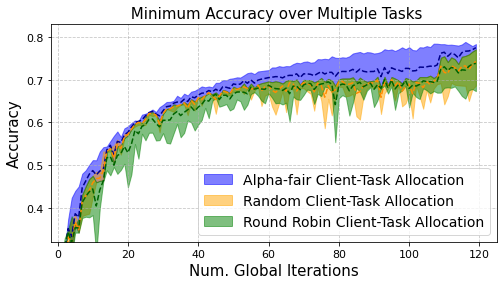}}
   \subfigure[Accuracy for task 3: Fashion-MNIST.]
  {\includegraphics[scale=0.35]{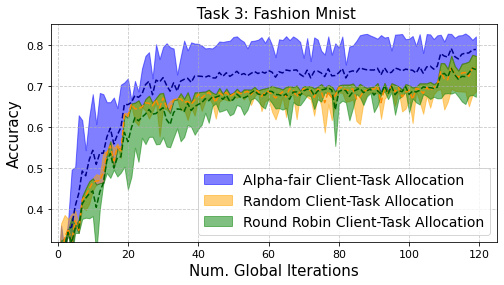}}
  \subfigure[\textcolor{orange}{Accuracy for Task 1: MNIST.}]
  {\includegraphics[scale=0.43]{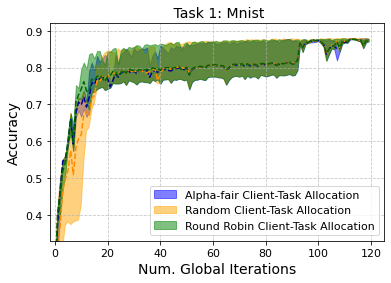}}
  \subfigure[\textcolor{orange}{Accuracy for Task 2: CIFAR-10.}]
  {\includegraphics[scale=0.43]{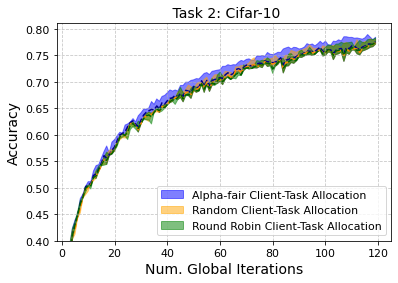}}
  \caption[Convergence.]
  {\textcolor{brown}{\textbf{Results of Experiment 1: 3 tasks with varying difficulty levels.} 
  The 3 tasks have datasets MNIST, CIFAR-10, and Fashion-MNIST. 
  \textbf{FedFairMMFL} achieves a higher minimum accuracy across tasks, as compared to the baselines, while maintaining the same or higher accuracy for the other 2 tasks.}
  }
  \vspace{-4mm}
  \label{fig:mcf_120users}
\end{figure}


\begin{figure*}[t]
  \centering
\subfigure[Variance, Num. tasks=4.]{\includegraphics[scale=0.32]{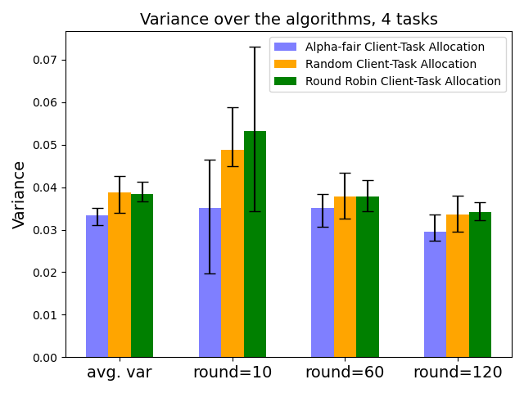}}
  \subfigure[Variance, Num. tasks=5.]{\includegraphics[scale=0.32]{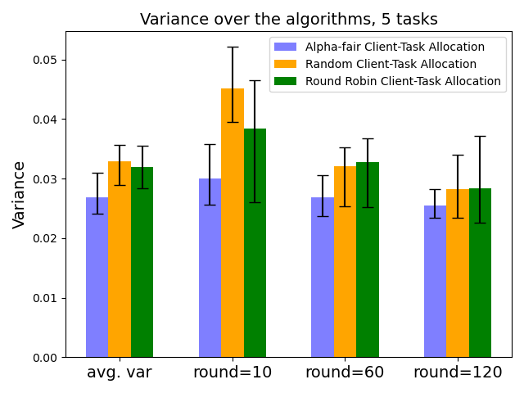}}
  \subfigure[Variance, Num. tasks=6.]{\includegraphics[scale=0.32]{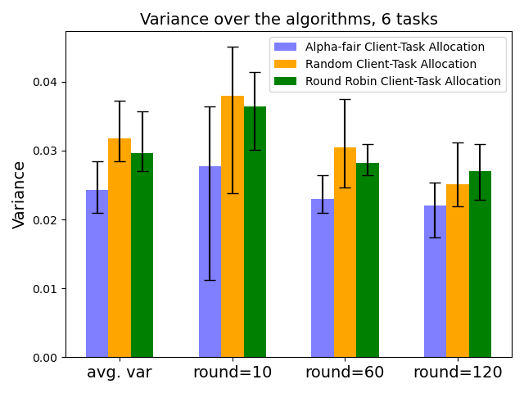}}
  \subfigure[Variance, Num. tasks=10.]{\includegraphics[scale=0.32]{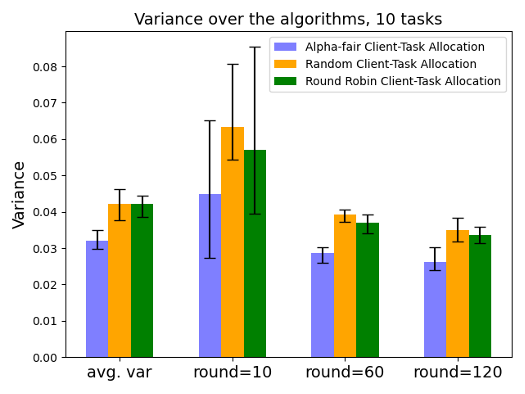}}
  \\
  \subfigure[Min accuracy, Num. tasks=3.]{\includegraphics[scale=0.35]{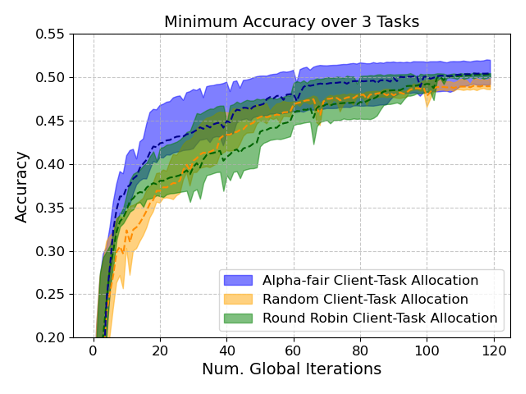}}
  \subfigure[Min accuracy, Num. tasks=6.]{\includegraphics[scale=0.35]{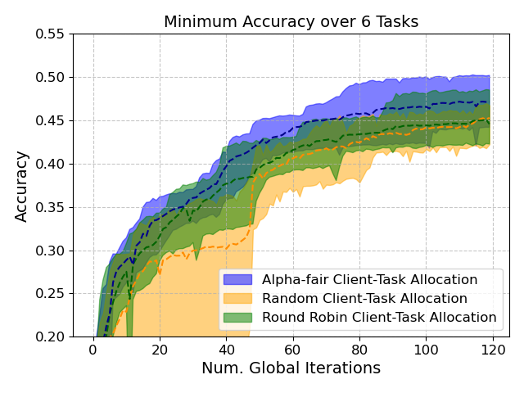}}
  \subfigure[Min accuracy, Num. tasks=10.]{\includegraphics[scale=0.35]{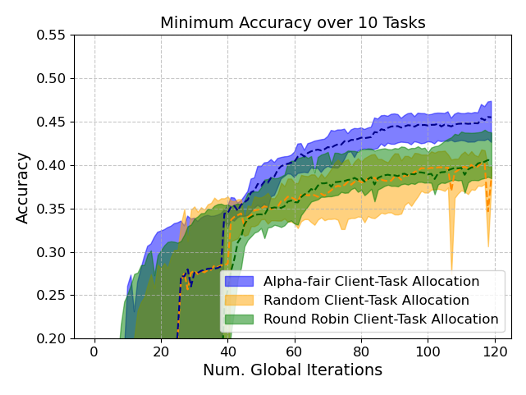}}
  \caption[Convergence.]
  {\textcolor{teal}{\textbf{Results of Experiment 2: Increasing the number of tasks.} 
  (a), (b), (c), and (d) compare the variance in accuracy observed across multiple tasks, when employing the different algorithms. 
  The error bars denote the max/min values of the corresponding variance among all 4 random seeds. Our algorithm generally achieves a lower variance amongst task performance. (e), (f), and (g) show the trend of minimum accuracy when task numbers are 3, 6, and 10, respectively. 
  }}
  \label{fig:exp2}
  \vspace{-3.5mm}
\end{figure*}

\begin{figure}[h]
  \centering
  \subfigure[Minimum accuracy over 3 tasks, MNIST, CIFAR-10, and Fashion-MNIST. 
  ]{\includegraphics[scale=0.3]{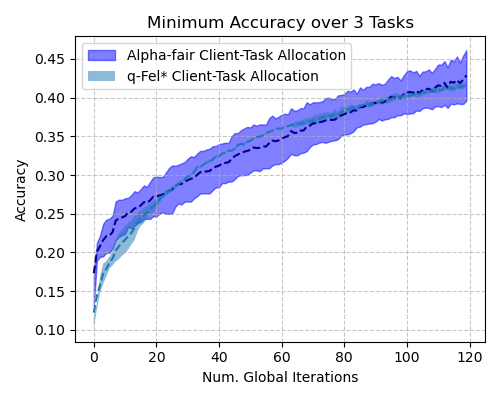}}
  \subfigure[Minimum accuracy over 4 tasks, MNIST, CIFAR-10, EMNIST, and Fashion-MNIST.]{\includegraphics[scale=0.3]{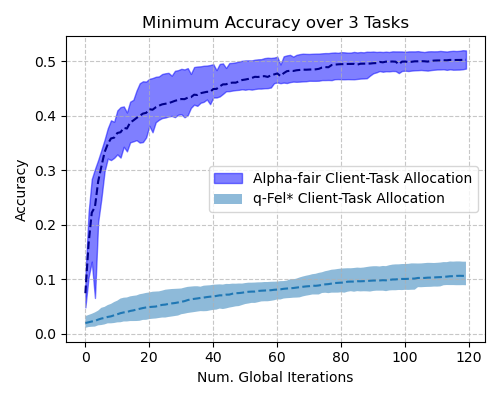}}
  \caption[Convergence.]
  {\textcolor{green}{\textbf{Comparison of two alpha-fair based methods.} 
  a) 3 tasks include MNIST, CIFAR-10, and Fashion-MNIST. 
 b) 4 tasks include MNIST, CIFAR-10, EMNIST, and Fashion-MNIST. 
EMNIST convergence is extremely slow with q-Fel. 
  \textbf{FedFairMMFL} converges faster and achieves a higher minimum accuracy across tasks compared to q-Fel for task fairness. }
  }
  \vspace{-4mm}
  \label{fig:q-fel}
\end{figure}

\begin{figure*}[t]
  \centering
\subfigure[Num. Clients=80.]{\includegraphics[scale=0.3]{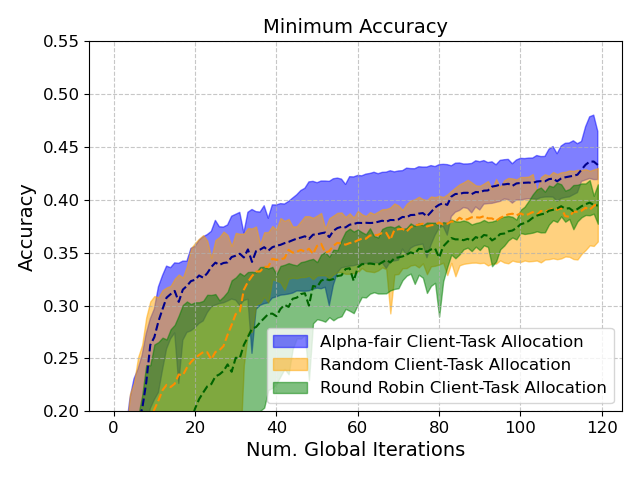}}
  \subfigure[Num. Clients=120.]{\includegraphics[scale=0.3]{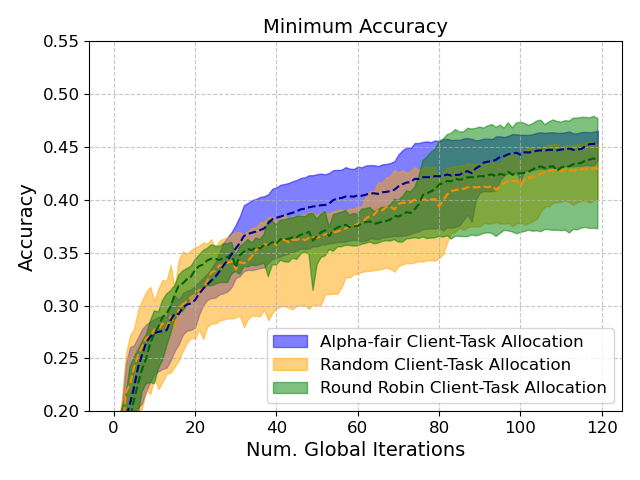}}
  \subfigure[Num. Clients=160.]{\includegraphics[scale=0.3]{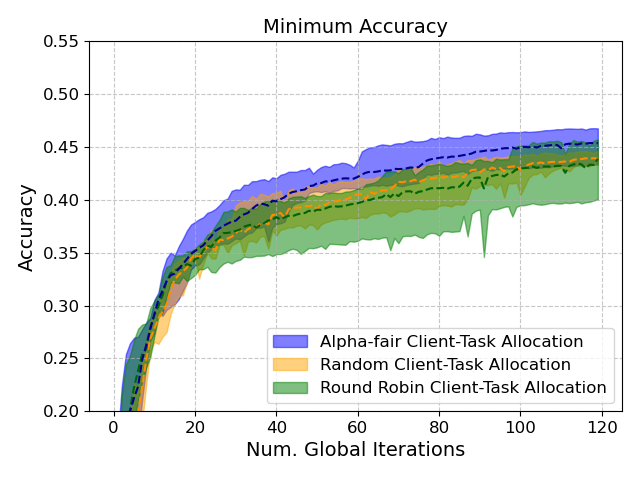}}
  \caption[Convergence.]
  {\textcolor{teal}{\textbf{Results of Experiment 3: Increasing the number of clients.} All experiments have 5 tasks (MNIST, CIFAR-10, Fashion-MNIST, EMNIST, and CIFAR-10). Experiments are conducted with 4 random seeds. The participation rate is 0.25, and client numbers are 80, 120, and 160, respectively. Our algorithm \textbf{FedFairMMFL} generally achieves a higher minimum accuracy, and/or a faster convergence.
  }}
  \label{fig:exp3}
  \vspace{-3.5mm}
\end{figure*}

\begin{figure*}[h]
  \centering
  \subfigure[Random allocation.]{\includegraphics[scale=0.36]{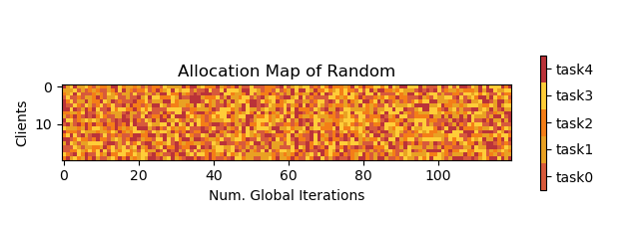}}
  \subfigure[Round robin allocation.]{\includegraphics[scale=0.36]{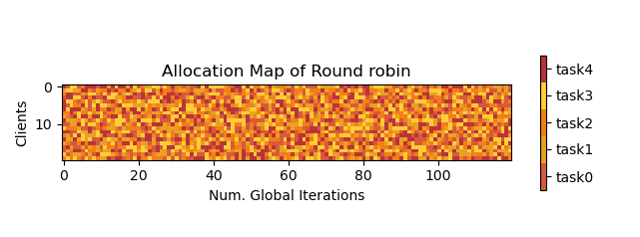}}
  \subfigure[\haoran{FedFairMMFL} ($\alpha=1$).]{\includegraphics[scale=0.36]{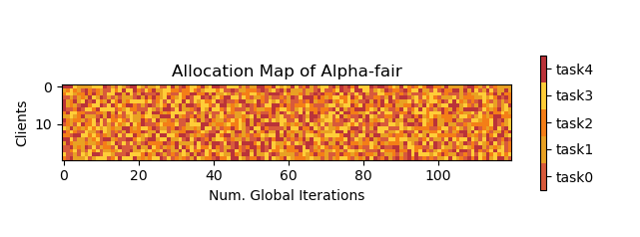}}
  \subfigure[\haoran{FedFairMMFL} ($\alpha=2$).]{\includegraphics[scale=0.36]{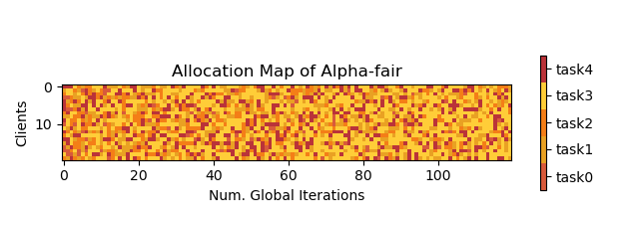}}
  \subfigure[\haoran{FedFairMMFL} ($\alpha=3$).]{\includegraphics[scale=0.36]{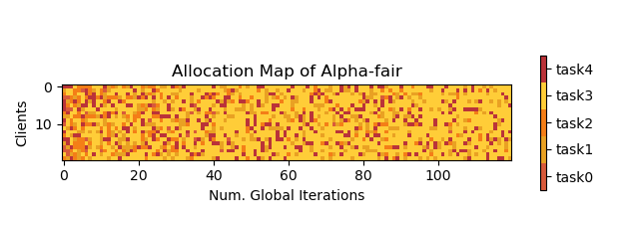}}
  \subfigure[\haoran{FedFairMMFL} ($\alpha=4$).]{\includegraphics[scale=0.36]{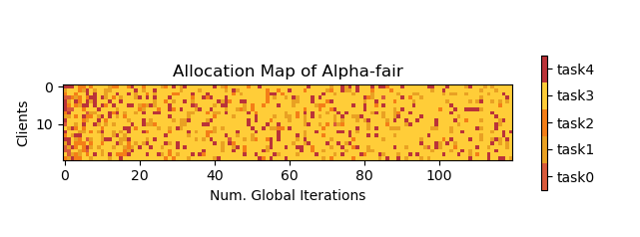}}
  \subfigure[\haoran{FedFairMMFL} ($\alpha=5$).]{\includegraphics[scale=0.36]{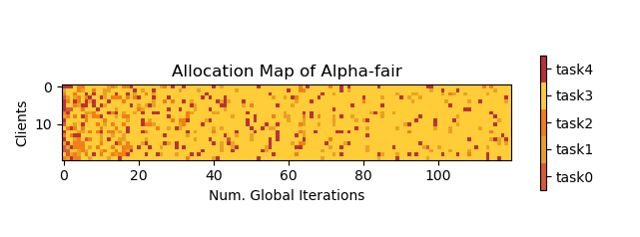}}
  \subfigure[\haoran{FedFairMMFL} ($\alpha=6$).]{\includegraphics[scale=0.36]{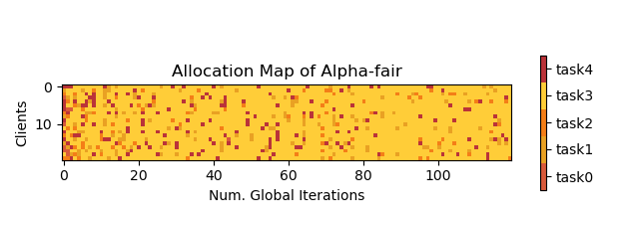}}
  \caption[Convergence.]
  {\textbf{Experiment 4 on different values of $\alpha$}: (a)-(h) show the maps of clients allocation to 5 tasks for 120 rounds of training in two baseline algorithms and FedFairMMFL with different $\alpha$. (c) $\alpha=1$ yields a similar result as the (a) random allocation, and as $\alpha$ increases, more resources are allocated to the hardest task (task 3). \textcolor{pink}{The 5 tasks are MNIST, CIFAR-10, Fashion-MNIST, EMNIST, and CIFAR-10, respectively.} 
  }
  \label{fig:different_alpha}
\end{figure*}

\begin{figure*}[t]
  \centering
\subfigure[Difference in the number of users which will train the tasks in MMFL.]{\includegraphics[scale=0.17]{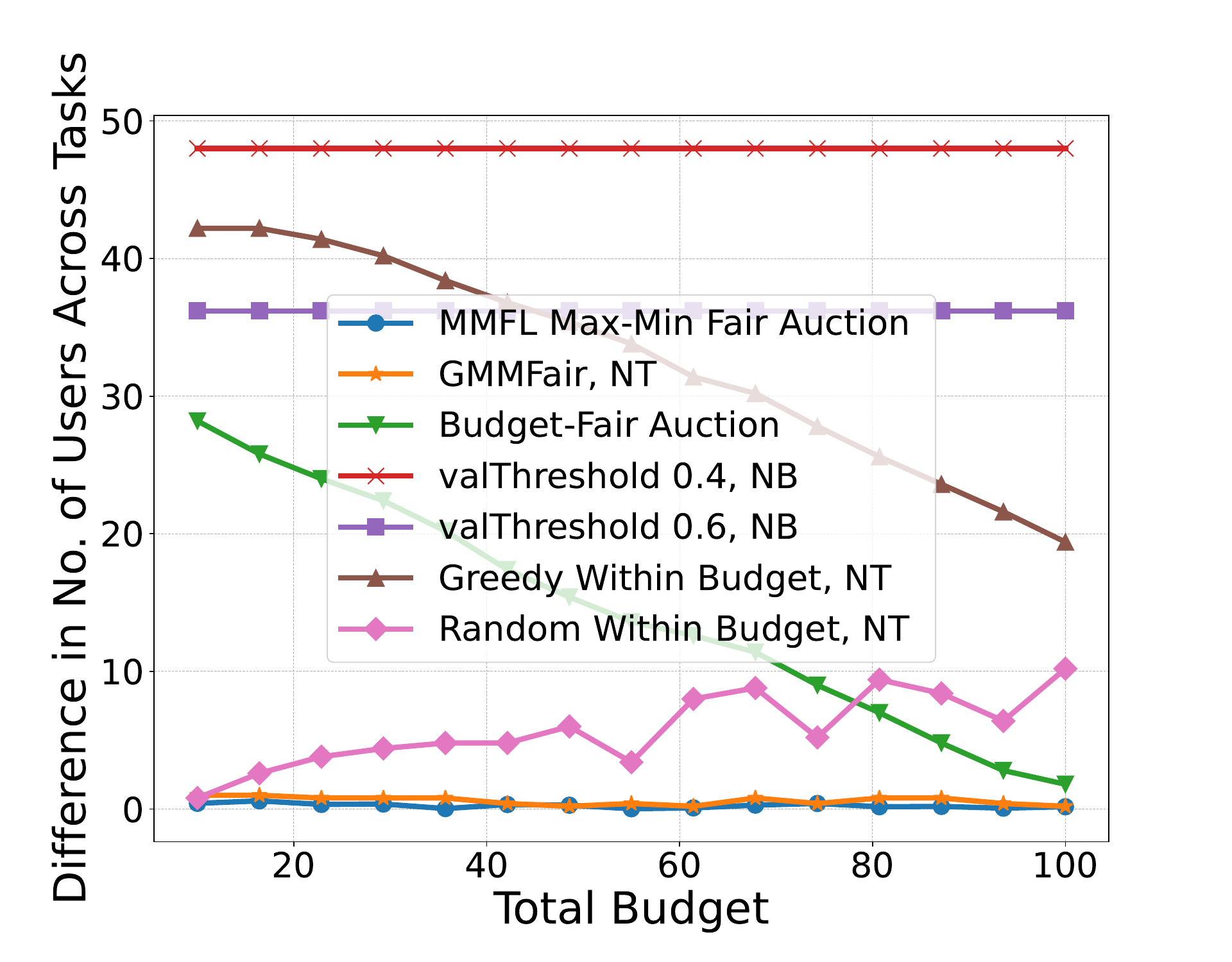}}
  \subfigure[Minimum number of users recruited for training, over two tasks.]{\includegraphics[scale=0.17]{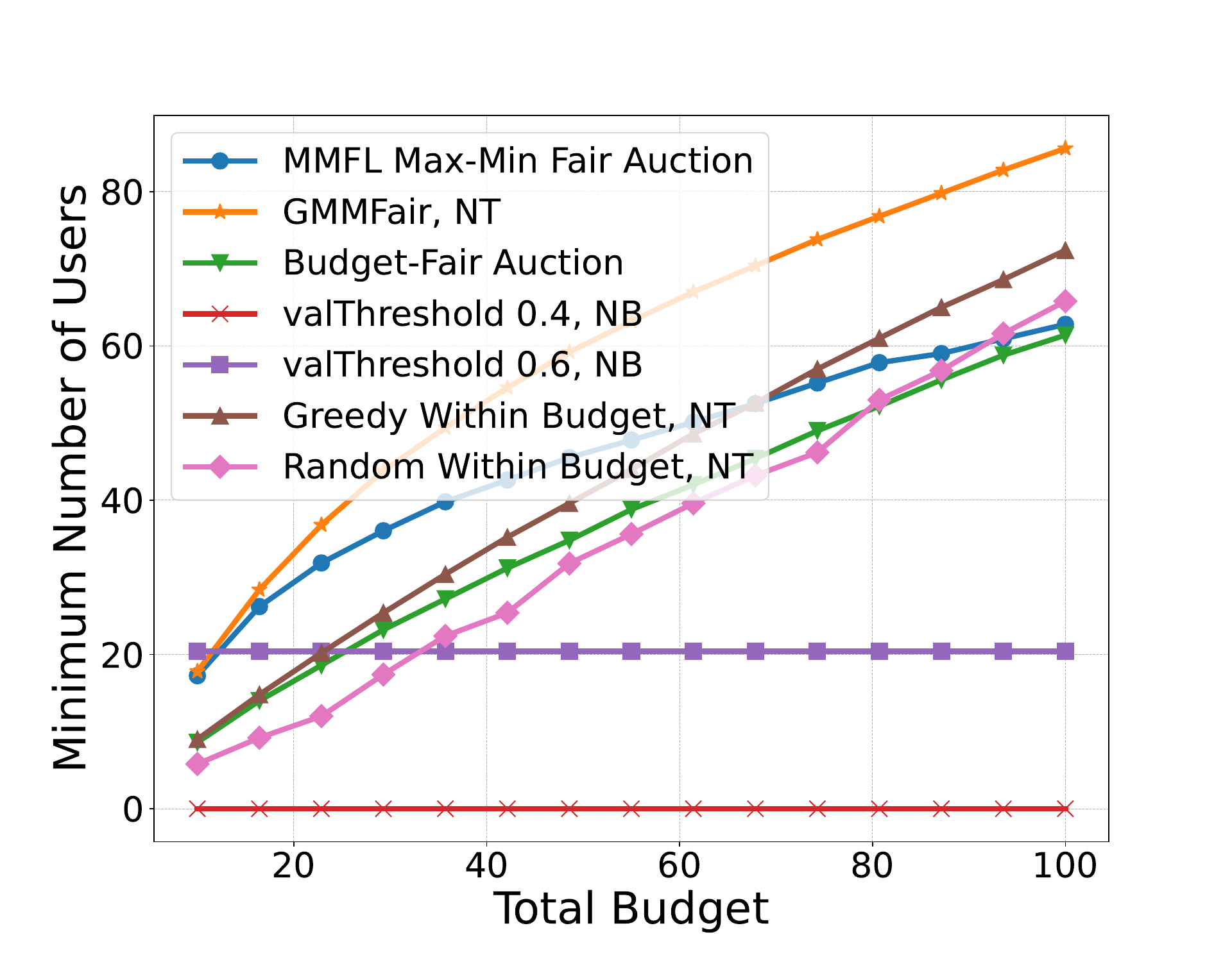}}
   \subfigure[ The minimum accuracy after combining our auction strategies and baselines, with \textbf{FedFairMMFL},  \textcolor{magenta}{i.e. combining the client recruitment part and model training parts of the MMFL pipeline.}]{\includegraphics[scale=0.41]{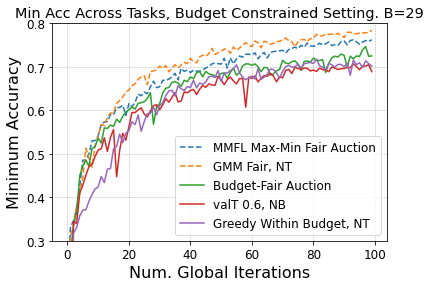}}
  
  \caption[Convergence.]
  {\textbf{Results of Experiment 5: Client Take-Up Rates.} 
  \textcolor{violet}{(NB= No Budget, NT= Non Truthful Mechanism). Our near-truthful auction MMFL Max-Min Fair (a) leads to a lower difference in the number of users across tasks, and (b) incentivizes more clients than any incentivization algorithm but the un-truthful GMMFair. 
  \textbf{Results of Experiments 6: How auctions impact learning.} Fig. (c) shows the minimum accuracy after combining the results of our auction strategies and baselines, with \textbf{FedFairMMFL}. The MMFL Max-Min Fair Auction leads to higher accuracies.}
  }
  \vspace{-5mm}
  \label{fig:ClientRecruitmentStrategiesEval}
\end{figure*}


Finally we evaluate \textbf{FedFairMMFL} and our auction schemes. 
Our \textbf{main results} are that (i) \textit{\textbf{FedFairMMFL} consistently performs fairly} across different experiment settings, \textcolor{orange}{increasing the minimum accuracy and decreasing the variance over tasks, while \emph{maintaining the average accuracy over tasks}}; 
and (ii) \textit{our max-min auction increases the minimum learning accuracy} across tasks in the budget constrained setting. 
We concurrently train 
\textcolor{teal}{(i) three tasks of \textit{varied} difficulty levels, and evaluate the algorithms' performance across varied numbers of (ii) tasks and (iii) clients. We analyze (iv) the impact of our auctions and baselines on the take-up rate over tasks, 
and analyze (v) the impact of these take-up rates on the tasks' learning outcomes. We use $\alpha=3$ for \textbf{FedFairMMFL} in all experiments. The effect of different $\alpha$ is also measured and discussed. 
} 


\textcolor{blue}{Our client-task allocation algorithm \textbf{FedFairMMFL} is measured against \textcolor{green}{three} \textbf{baselines}: a) \textit{Random}, where clients are randomly allocated a task with equal probability each global epoch \textcolor{purple}{(note that this corresponds to taking $\alpha = 1$ in our $\alpha$-fair allocation)}, b) \textit{Round Robin} \cite{bhuyan2022multi}, where active clients 
are allocated to tasks in a round-robin manner, sequentially across global epochs, 
\textcolor{green}{and c) \textit{q-Fel} \cite{li2019fair}, \textcolor{pink}{which also uses alpha-fairness to enhance fairness (but across clients and not tasks), where the parameter $q$ is the $\alpha$-equivalent.} 
Instead of changing the sampling probability as in our \textbf{FedFairMMFL}, \textit{q-Fel} directly changes the aggregation coefficients based on the $\alpha$-fairness function and adjusts the global learning rate accordingly.}
\textcolor{pink}{For the baseline, we use an adapted version of \textit{q-Fel}, where fairness is over tasks and not clients.}
Our main fairness \textbf{metrics} are the minimum test accuracy across all tasks and the variance of the test accuracies (cf. Lemma~\ref{lemma:variance}).
As some models may have larger training losses by nature, 
e.g., if they are defined on different datasets, 
making them more easily favored by the algorithm, we use the test accuracy instead of training loss in \textbf{FedFairMMFL}.
Each experiment's results are averaged over 3-5 random seeds - we plot the average values (dotted line), along with the maximum and minimum values indicated by the shaded area.  
Each client's data 
is sampled from either \textcolor{teal}{all classes} (i.i.d.) or \textcolor{teal}{a randomly chosen half of the classes} (non-i.i.d.).}

\textcolor{violet}{\textbf{Experiment 1: Tasks of Varying Difficulty Levels.}
To model differing task difficulties, 
our tasks are to train image recognition models on MNIST and Fashion-MNIST on a convolutional neural network with 2 convolutional, 2 pooling, and 2 fully connected layers; and CIFAR-10 with a pre-activated ResNet. 
\textcolor{orange}{Our algorithm does not assume that the task with larger model is the more difficult task, rather it takes the current performance (accuracy) levels as an indicator on which task needs more clients currently.}
Experiments are run with 120 clients 
having $150-250$ datapoints from five (randomly chosen) of the ten classes (i.e., non i.i.d. data); $35\%$ of clients, chosen uniformly at random, are active each round. 
As seen in Fig. \ref{fig:mcf_120users} a), \textbf{FedFairMMFL} has a 
higher minimum accuracy than the random and round robin baselines. 
\textbf{FedFairMMFL} adaptively allocates more clients to the prevailing worst-performing task, resulting in a higher accuracy for this task (Task 3: Fashion-MNIST), as seen in Fig. \ref{fig:mcf_120users} b), while maintaining the same or a slightly higher accuracy for the other tasks \textcolor{orange}{(see Fig. \ref{fig:mcf_120users} c and d.)}} Though we might guess that CIFAR-10 would be the worst-performing task, as it uses the largest model, \textbf{FedFairMMFL} identifies that Fashion-MNIST is the worst-performing task, and it achieves fair convergence even in this challenging setting where Fashion-MNIST persistently performs worse than the other tasks.

\begin{table}[!htbp]
\caption{\textcolor{orange}{Experiment 2 Quantitative Results. Our algorithm \textbf{FedFairMMFL} improves the minimum accuracy over tasks; at the same time, our average accuracy across tasks, is comparable with the baselines.}}
\label{tab:exp2}
\centering
\begin{tabular}{@{}c|cc|cc@{}}
\toprule
\multirow{2}{*}{Methods} & \multicolumn{2}{c|}{6 tasks} & \multicolumn{2}{c}{10 tasks} \\ \cmidrule(l){2-5} 
                         & Avg Acc   & Min Acc          & Avg Acc   & Min Acc          \\ \midrule
Random                   & 0.789     & 0.450            & 0.748     & 0.386            \\
Round Robin              & 0.782     & 0.453            & 0.758     & 0.406            \\
FedFairMMFL              & 0.783     & \textbf{0.475}   & 0.754     & \textbf{0.452}   \\ \bottomrule
\end{tabular}
\end{table}

\textcolor{teal}{
\textbf{Experiment 2: Increasing the number of tasks. 
} To further understand the various algorithms (\textbf{FedFairMMFL} and the random allocation and round robin baselines) under different settings, their performance is measured as we increase the number of tasks, from 3 to 4, 5, 6, and 10. The tasks include MNIST, Fashion-MNIST, CIFAR-10, and EMNIST. 
We use the same networks as in Experiment 1 and a similar convolutional neural network with 2 convolutional, 2 pooling, and 2 fully connected layers for the EMNIST task. 
Experiments are run with 20 clients, having a participation rate of 1. 
Each client has $400-600$ datapoints from each dataset for each task. 
We plot the variance amongst tasks on average, as well as the variance at global epoch round numbers 10, 60, and 120.
As seen in Fig. \ref{fig:exp2}, with the increase in the round number, all algorithms' variance decreases, and 
\textbf{FedFairMMFL} has the lowest variance for all settings. This can be explained through the fact that \textbf{FedFairMMFL} dynamically allocates clients with a higher probability to tasks which are currently performing worse, 
eliminating the difference among tasks as training progresses. \textcolor{purple}{This effect remains even as we increase the number of tasks, which slows the training for all tasks as the clients are being shared among more tasks} - see Fig. \ref{fig:exp2} (e), (f), and (g), which compare the minimum accuracy among different algorithms when task numbers are 3, 6, and 10. \textbf{FedFairMMFL} converges faster than both baselines. 
}
\textcolor{green}{
Table \ref{tab:exp2} shows the final accuracies for these algorithms. In terms of task fairness, \textbf{FedFairMMFL} achieves higher minimum accuracy across tasks. Besides, even though \textbf{FedFairMMFL} allocates more clients to tasks with the prevailing higher loss values during training, \emph{the average accuracy is not impacted much}, and is comparable with the baselines, as seen in Table \ref{tab:exp2}.}

\textcolor{green}{
Fig. \ref{fig:q-fel} compares two alpha-fair based methods: our algorithm FedFairMMFL, and \textcolor{pink}{an adapted version of q-Fel \cite{li2019fair}, where fairness is over tasks and not clients.} 
We notice that q-Fel converges quite slowly when we include EMNIST in the set of tasks. 
This is because, \textcolor{pink}{to guarantee convergence, q-Fel adjusts the aggregation coefficients and learning rate.} 
\textcolor{pink}{For specific tasks, the adjusted global learning rate (which is inferred from the
upper bound of the local Lipschitz constants of the gradients) can be extremely small, therefore slowing down convergence,} which \cite{li2019fair} also showed in their results. \textbf{FedFairMMFL} directly modifies the client sampling probability and does not modify the aggregation coefficients, providing a more reliable performance in diverse task settings. 
}

\textcolor{teal}{
\textbf{Experiment 3: Increasing the number of clients.} 
Different client numbers are also tested to further evaluate \textbf{FedFairMMFL}'s performance. Here, all experiments have 5 tasks (MNIST, CIFAR-10, Fashion-MNIST, EMNIST, and CIFAR-10), with client participation rate $25\%$ and $200-300$ datapoints for each client. We vary the number of clients 
from 80 to 160. Experiment results are shown in Fig. \ref{fig:exp3}. It is clear to see that in this setting, \textbf{FedFairMMFL} outperforms the other two algorithms, either achieving a higher minimum accuracy, and/or a faster convergence.
Its performance is also further improved as the client number increases, and is more stable among different random seeds\textcolor{purple}{, likely because having more clients allows the worse-performing tasks to "catch up" with the others faster under the $\alpha$-fair allocation. }
}

\textbf{Experiment 4: The impact of different values of $\alpha$.}
\textcolor{orange}{To understand the effect of $\alpha$ on client-task allocation over the whole training process, we test the FedFairMMFL with multiple values of $\alpha$. Experimental results are shown in Fig. \ref{fig:different_alpha}. Here all experiments have 5 tasks (MNIST, CIFAR-10, Fashion-MNIST, EMNIST, and CIFAR-10), and we have $20$ clients that participate in each round. 
Each client has $400-600$ datapoints from each dataset for each task. We use the same models as we used in Experiment 2 and 3. In this setting, EMNIST (task 3) is always the hardest task, and therefore gets more clients as $\alpha$ increases. When $\alpha=1$, allocation is uniform as the algorithm degenerates to the random client-task allocation. When $\alpha$ is too high, most of the resources will be allocated to only one task as shown in Fig. \ref{fig:different_alpha} (g)(h). This uneven allocation can lead to other simpler tasks not receiving sufficient training and requiring additional rounds to achieve satisfactory accuracy. Since $\alpha=3$ balances the fairness (in terms of the minimum accuracy and variance) and the training efficiency (in terms of the average accuracy) in a desirable way, we use $\alpha=3$ in all of our other experiments.}

\textcolor{blue}{\textbf{Experiment 5: Client Take-Up Rates.} We compare the number of clients incentivized to train each task under our budget-fair (Section \ref{sec:budget-fair}) and max-min fair (Section \ref{subsec:MaxMinFairAuction}) auctions, as well as the (un-truthful) GMMFair algorithm, which upper-bounds the max-min fair auction result, 
and 4 other baselines: \textit{valThreshold 0.4} and \textit{valThreshold 0.6} assume no budget but that all users with cost (valuation) $c_{i,s} <0.4$ (respectively $c_{i,s} <0.6$) will agree to train the task; these simulate posted-price mechanisms that show the effect of the incentivization budget. \textit{Greedy Within Budget} and \textit{Random Within Budget} evenly divide the budget across tasks. Clients are added by ascending bid (for \textit{Greedy Within Budget}) and randomly (for \textit{Random Within Budget}), until the budget is consumed; comparison to these baselines shows the cost of enforcing truthfulness in the budget-fair auction.
We consider two tasks and 100 clients; client bids are between 0 and 1 and are independently drawn from a truncated Gaussian distribution for Task 1 and an increasing linear distribution for Task 2. Results are averaged across 5 random seeds.
}

\textcolor{violet}{As seen in Fig. \ref{fig:ClientRecruitmentStrategiesEval} a), our auction MMFL Max-min Fair leads to the lowest difference in the number of clients across tasks, managing to hit the theoretical optimal of the non-truthful algorithm GMMFair. 
\textit{Greedy within budget} assigned the most clients to task 1, resulting in a high "difference" as it assigns less clients to task 2 (increasing linear distribution of valuations), nevertheless \textit{Greedy within budget} (along with GMMFair and \textit{Random within budget}) are not truthful. Both \textit{Greedy within budget} (non truthful) and the Budget-Fair Auction become more fair (decreasing difference across tasks) as the budget increases. Nevertheless MMFL Max-Min Fair outperforms them in the budget constrained region.} 
As seen in Fig. \ref{fig:ClientRecruitmentStrategiesEval} b), GMMFair yields the highest minimum number of users assigned to each task, consistent with Lemma \ref{lemma:algo2optimizesMaxMin}. 
When the budget is low (\textit{budget constrained regime}), our near-truthful auction MMFL Max-min Fair outperforms all the other mechanisms in the minimumn number of users. As the budget increases, the greedy-based mechanisms outperform it. Nevertheless, they and GMMFair are not truthful, and can be manipulated if all users jointly submit high bids. 

\textcolor{black}{\textbf{Experiment 6: Take-Up Rates and Learning Outcomes.} We finally consider a constrained budget ($B=29$) and evaluate how the outcome of the client recruitment strategies \textcolor{magenta}{(our auction strategies and baselines)} impact MMFL model training, 
\textcolor{magenta}{i.e. combining the client recruitment and model training parts of the pipeline (Fig. \ref{fig:MMFLdiag}). Specifically, the client recruitment strategies (\textit{MMFL Max-Min Fair}, \textit{GMM Fair}, and baselines \textit{valThreshold}, \textit{Greedy Within Budget}) determines which client has been recruited for which task. These clients will then be involved in MMFL training under our algorithm FedFairMMFL.}
Fig. \ref{fig:ClientRecruitmentStrategiesEval} c) shows the minimum test accuracy across the two tasks (MNIST and CIFAR-10) at each training round, using \textbf{FedFairMMFL} for client-task allocation. }
GMMFair consistently has the highest minimum accuracy across tasks, likely because it recruits the most clients to each task, as shown in Experiment 5 (Fig. \ref{fig:ClientRecruitmentStrategiesEval} b). Our MMFL auction, which additionally has the benefit of near-truthfulness, achieves nearly the same minimum accuracy, showing that it performs well in tandem with \textbf{FedFairMMFL}.

\section{Conclusion and Future Directions}\label{sec:conclusion}
In this paper, we consider the fair training of concurrent MMFL tasks. 
Ours is the first work to both theoretically and experimentally validate algorithms that ensure fair outcomes across different learning tasks, \textcolor{orange}{which are concurrently trained in an FL setting.} We first propose \textbf{FedFairMMFL}, an algorithm to allocate clients to tasks in each training round, \textcolor{orange}{based on prevailing performance levels,} showing that it permits theoretical convergence guarantees. We then propose two auction mechanisms, \textbf{Budget-Fair} and \textbf{MMFL Max-Min Fair}, that seek to incentivize a fair number of clients to commit to training each task. Our experimental results for training different combinations of FL tasks show that \textbf{FedFairMMFL} and \textbf{MMFL Max-Min Fair} can jointly ensure fair convergence rates and final test accuracies across the different tasks. As the first work to consider fairness in the MMFL setting, we expect our work to seed many future directions, including how to balance achieving fairness over tasks with fairness over clients' utilities. Another direction is to investigate fairness when tasks have differing difficulties and clients have heterogeneous, and dynamically available computing resources for training.





%

\bibliographystyle{IEEEtran}

\bibliography{references} 


\end{document}